\def\aaai{0}
\newtheorem{theorem}{Theorem}{\itshape}{\rmfamily}
{\itshape}{\rmfamily}
\newtheorem{lemma}{Lemma}{\itshape}{\rmfamily}
{\itshape}{\rmfamily}
{\itshape}{\rmfamily}
\newcommand{\sq}{\hbox{\rlap{$\sqcap$}$\sqcup$}}
\newcommand{\qed}{\hspace*{\fill}\sq}
\newenvironment{proof}{\noindent {\bf Proof.}\ }{\qed\par\vskip 4mm\par}
\newenvironment{proofof}[1]{\bigskip \noindent {\bf Proof of #1:}\quad }
{\qed\par\vskip 4mm\par}
\newcommand{\eq}[2]{\begin{equation}\label{#1} #2 \end{equation}}
\newcommand{\eqm}[2]{\begin{equation}\label{#1}\begin{split} #2 \end{split}\end{equation}}
\newcommand{\eqmx}[1]{\begin{equation*} \begin{split} #1 \end{split}\end{equation*}}
\newcommand{\E}[1]{\operatorname*{\mathbb{E}}_{\substack{#1}}}
\newcommand{\define}[0]{\doteq}
\newcommand{\indicator}[1]{\mathbb{I}(#1)}
\newcommand{\F}{\mathcal{F}}
\newcommand{\Fbar}{\bar{\mathrm{F}}}
\newcommand{\G}{\mathrm{G}}
\newcommand{\M}{\mathcal{M}}
\newcommand{\Yhat}[2]{\hat{Y}^{\scriptscriptstyle (#1)}_{#2}}
\renewcommand{\i}[0]{^{\scriptscriptstyle (i)}}
\title{Simpson's Bias in NLP Training}
 \author {
     {Fei Yuan}\textsuperscript{\rm 1} 
     {\quad  Longtu Zhang}\textsuperscript{\rm 2} 
     {\quad  Huang Bojun}\textsuperscript{\rm 2}\thanks{
     	Correspondence to:~ \texttt{bojhuang@gmail.com} \vspace{0.1in}
 	 }
     {\quad Yaobo Liang}\textsuperscript{\rm 3} \\
 }
\author {
	\quad\quad\quad Fei Yuan \quad\quad\quad\quad\quad~
	Longtu Zhang 			 \quad\quad\quad\quad\quad~
	Huang Bojun \thanks{
	Correspondence to :~ \texttt{bojhuang@gmail.com} \vspace{0.1in} 
	} 						 \quad\quad\quad\quad~
	Yaobo Liang 			 ~~
	\\
}
\begin{document}

 \maketitle

\begin{abstract}
In most machine learning tasks, we evaluate a model $\M$ on a given data population $S$ by measuring a population-level metric $\F(S;\M)$. Examples of such evaluation metric $\F$ include precision/recall for (binary) recognition, the F1 score for multi-class classification, and the BLEU metric for language generation. On the other hand, the model $\M$ is trained by optimizing a sample-level loss $\G(S_t;\M)$ at each learning step $t$, where $S_t$ is a subset of $S$ (a.k.a. the mini-batch). Popular choices of $\G$ include cross-entropy loss, the Dice loss, and sentence-level BLEU scores. A fundamental assumption behind this paradigm is that the mean value of the sample-level loss $\G$, if averaged over all possible samples, should \emph{effectively represent} the population-level metric $\F$ of the task, such as, that $\E{}[ \G(S_t;\M) ] \approx \F(S;\M)$.  

In this paper, we systematically investigate the above assumption in several NLP tasks. We show, both theoretically and experimentally, that some popular designs of the sample-level loss $\G$ may be inconsistent with the true population-level metric $\F$ of the task, so that models trained to optimize the former can be substantially sub-optimal to the latter, a phenomenon we call it,  \emph{Simpson's bias}, due to its deep connections with the classic paradox known as \emph{Simpson's reversal paradox} in statistics and social sciences. 
\end{abstract}

\section{Introduction}
\label{sec_intro}


Consider the following standard and general paradigm of NLP training: given a corpus $S$ consisting of $n$ samples, each indexed by $i = \{1,\dots,n\}$, the training of NLP model $\M$ aims at optimizing a corpus-level objective $\F(S;\M)$. 
For example, a popular training method follows the maximum likelihood estimation (MLE) principle, in which a sample is a $(x_i,y_i)$ pair with $x_i$ being a decision context, which is usually one or more sentences in NLP tasks, and $y_i$ being a desired atomic decision, which is usually a token in generative tasks or a class label in discriminative tasks. The corpus-level objective $\F$ that MLE-oriented training aims at maximizing is the log-likelihood of the whole corpus: $\F_{\texttt{MLE}}(S;\M) \define \sum_{i=1}^n \log \M(x_i,y_i)$.

The MLE objective is relatively easy to optimize because we can construct a sample-level loss function $\G(i;\M)$ such that the sample average $\Fbar(S;\M) \define \frac{1}{n}\sum_{i=1}^n \G(i;\M)$ can ``effectively represent'' $\F_{\texttt{MLE}}(S;\M)$ as a surrogate objective of the optimization. Specifically, since $\F_{\texttt{MLE}}$ itself is \emph{additive} with respect to the samples in $S$, we can simply take the CE loss $\G_{\texttt{MLE}}(i;\M) \define \F_{\texttt{MLE}}(\{i\};\M)$, which gives

\vspace{-0.1in} \small 
\eqmx{  
	\Fbar_{\texttt{MLE}}(S;\M) 
	&= \frac{1}{n}\sum_{i=1}^n \F_{\texttt{MLE}}(\{i\};\M) 
	= \frac{1}{n}\sum_{i=1}^n \log \M(x_i,y_i) 
	\\& \propto \F_{\texttt{MLE}}(S;\M)
.}
\normalsize \noindent
The average form of $\Fbar_{\texttt{MLE}}$ admits efficient stochastic-gradient optimization (which requires the objective to be a population mean such that its gradient can be unbiasedly estimated by the gradient of the sample mean over a random mini-batch), and the proportionality between $\Fbar_{\texttt{MLE}}$ and $\F_{\texttt{MLE}}$ guarantees that an optimal (better) solution of the former is also an optimal (better) solution of the latter.

However, it is rare that a task directly uses $\F_{\texttt{MLE}}$ as the end-to-end \emph{evaluation metric}. Instead, common evaluation metrics used in practice include accuracy, precision/recall/F1 (for discriminative tasks), and BLEU~\cite{papineni2002bleu} (for machine translation and other language generation tasks). While a model trained with $\G_{\texttt{MLE}}$ may well optimize the corresponding MLE objective $\F_{\texttt{MLE}}$, it does not necessarily optimize the true evaluation metric of the task. For this reason, researchers have proposed to optimize alternative objective $\F$ that is closer to, or in some cases equal to, the true evaluation metric used at testing time. For example, the \emph{Dice loss}~\cite{DBLP:conf/acl/LiSMLWL20} has been recently proposed for tasks such as Paraphrase Similarity Matching (PSM) and Named Entity Recognition (NER) because of its similarity to the F1 metric used in these tasks. Similarly, \emph{sentence-level BLEU} scores have been used in sentence-level training for machine translation due to its correspondence to the true corpus-level BLEU metric~\cite{2016:ranzato,wu2016google,2018:edunov}.

Unfortunately, these alternative learning objectives posed new challenges in optimization. Specifically, metrics like F1 and BLEU (and many others) are not \emph{sample-separable}, meaning that they cannot be converted proportionally or monotonically into an averaged form $\Fbar$ as in the case of MLE. Consequently, while the \emph{intended} objectives $\F_{\texttt{F1}}$ and $\F_{\texttt{BLEU}}$ are more aligned with the evaluation metric of the corresponding tasks, what the training algorithms are truly optimizing is usually the \emph{averaged-form} objectives $\Fbar_{\texttt{F1}}$ and $\Fbar_{\texttt{BLEU}}$, and models thus trained could improve the averaged objective $\Fbar$ while at the same time being worse with respect to the intended objective $\F$.

In this paper, we call the disparity mentioned above, \emph{Simpson's bias}. It is a bias between non-separably aggregated objective $\F$ and its corresponding averaged form $\Fbar$. The name is inspired by the classic paradox known as \emph{Simpson's reversal} in statistics and social sciences, which refers to a class of conflicting conclusions obtained when comparing two ``candidates'' based on their aggregated performance and based on their per-case performance. In the following, we will give a systematic analysis on how a similar effect can widely arise in the context of machine learning when designing sample-level loss for many popular metrics including precision, recall, Dice Similarity Coefficient (DSC), Macro F1, and BLEU. We then experimentally examine and verify the practical impacts of the Simpson's bias on the training of state-of-the-art models in three different NLP tasks: Paraphrase Similarity Matching (with the DSC metric), Named Entity Recognition (with the Macro-F1 metric), and Machine Translation (with the BLEU metric).

\section{The Simpson's Bias}

As discussed in the last section, the ultimate goal of NLP training is to optimize a set function $\F(S;\M)$ which is a corpus-wise \emph{aggregated} measurement of model $\M$'s performance on given data set $S=\{1 \dots n\}$. On the other hand, the model $\M$ is typically trained by following the gradient direction of a sample-level loss $\G(i;\M )$ on random sample $i\in S$.
\footnote{
	When mini-batch is used, the algorithm generates a random batch $S_t\subset S$ at each optimization step $t$ and follows the gradient direction of batch-wise averaged loss $\frac{1}{|S_t|}\sum_{i\in S_t} \G(i;\M)$.
} 
Such training is expected to find an extreme point of the \emph{averaged} performance $\Fbar_G(S;\M) \define \frac{1}{n} \sum_{i\in S} \G(i;\M)$. 

We will pay special attention to the ``naive'' sample-level loss $\G_\F(i;\M)\define \F(\{i\};\M)$, which uses the same metric $\F$ to measure a single sample. We use the $\Fbar$ without subscript to denote the corpus-wise averaged performance corresponding to this particular sample loss $G_{\F}$, so $\bar{F}\define \frac{1}{n} \sum_{i\in S} \G_{\F}(i;\M)$. Note that every well-defined set function $\F$ is conjugated with such an $\Fbar$, which is the arithmetic average of $\F$ over all singletons of $S$. On the other hand, the function $\F$ itself, when used as a performance metric in machine learning, often involves some form of ``complex averaging'' over $S$ as well. We are interested to understand whether, or to what extent, a model optimized for the arithmetic average $\Fbar$ can also perform well w.r.t. the ``complex'' average $\F$, for various specific forms of $\F\neq \F_{\texttt{MLE}}$. 




\subsection{Special case 1: Ratio of Sums (RoS)} 
This is a very common family of metric $\F$, which computes the ratio of two summations over the set $S$. Let $A_i$ and $B_i$ be two quantities defined on each sample $i$, the RoS family of $\F$ is generally in the form of
\eq{MoR}{
	\F(S) = \frac{\sum_{i=1}^n A_i}{\sum_{i=1}^n B_i}
} 
and the corresponding ``naively''-averaged metric is 
\eq{}{
	\Fbar(S) = \frac{1}{n} \sum_{i=1}^n \F(\{i\}) = \frac{1}{n} \sum_{i=1}^n \frac{A_i}{B_i}
.}
In the above, we have omitted $\M$, which is considered given in this section. As a best case, $\F$ of the RoS family equals $\Fbar$ in the following two conditions:

\vspace{0.1in}
\textbf{Type-1:} If $B_i \equiv B$ for some constant $B$, then 
\begin{equation*}
\Fbar(S) 
= \frac{1}{n} \sum_{i} \frac{A_i}{B_i} 
=  \frac{1}{nB}\sum_{i} A_i 
= \frac{\sum_i A_i}{\sum_i B_i} 
= \F(S) 
\end{equation*}

\textbf{Type-2:} If $\frac{A_i}{B_i} \equiv r$ for some constant $r$, then
\begin{equation*}
\Fbar(S) = \frac{1}{n}\sum_i \frac{A_i}{B_i} = r = \frac{\sum_i r B_i}{\sum_i B_i} = \frac{\sum_i A_i}{\sum_i B_i} = \F(S)
\end{equation*}

Depending on precise definitions of $A_i$ and $B_i$, the RoS family subsumes many concrete metrics used in NLP tasks. We discuss three popular RoS metrics in the following.

\paragraph{Scenario 1.a: Accuracy} 
Let $y_i$ be a ground-truth decision on sample $i$ and $\hat{y}_i$ the decision output by the model $\M$, the \emph{accuracy} of $\M$ on data set $S$ of size $n$ is
\eq{}{
	\F_{\texttt{AC}} = \frac{\sum_{i=1}^n \indicator{y_i=\hat{y}_i}}{n}
}
which is a special case of $\eqref{MoR}$ with $A_i=\indicator{y_i=\hat{y}_i}$ and $B_i=1$, where $\indicator{\cdot}$ is the indicator function.

Accuracy is the simplest case in our analysis, which does not suffer from the Simpson's bias at all, as it satisfies the type-1 condition above. In other words, optimization based on the naive sample-level loss $\G_{\texttt{AC}}(i;\M) =\indicator{y_i = \hat{y}_i}$ will maximize exactly the accuracy $\F_{\texttt{AC}}=\Fbar_{\texttt{AC}}$.

Note that in supervised learning, the sample loss $\G$ may further need to be differentiable, in which case the indicator variable $\indicator{y_i = \hat{y}_i}$ is usually approximated in practice. For example in \emph{binary recognition} problems, which ask to judge if each sample $i$ is positive or negative (w.r.t. some feature of interest), the model $\M$ is usually set to output a probability $p_i=\M(x_i)$, and differentiable sample losses such as $(p_i-y_i)^2$ are used, essentially as smoothed variants of the discrete loss $\indicator{y_i \neq \hat{y}_i}=1-\indicator{y_i = \hat{y}_i}$. 

We do not consider errors from such differentiablization tricks as part of the Simpson's bias under discussion, as the former is mostly a limit of only specific (types of) learning algorithms. In contrast, the Simpson's bias that we are studying in this paper is concerned more with \emph{intrinsic} properties of the learning objectives themselves. For example, the exact sample-level accuracy $\G_{\texttt{AC}}(i;\M)=\indicator{y_i = \hat{y}_i}$ can indeed be directly optimized through \emph{reinforcement learning} algorithms, in which case the learning algorithm is equivalently optimizing exactly the corpus-wise accuracy $\F{_\texttt{AC}}$.  

\paragraph{Scenario 1.b: Precision/Recall} 
~~While being applicable to almost all discrete decision tasks, accuracy can be problematic for tasks with imbalanced data. For example, in binary recognition problems, a model always outputting negative would have very high accuracy if positive samples are rare. Precision and recall are standard evaluation metrics used in binary recognition tasks to solve this problem.

In binary recognition problems, let $y_i \in \{0,1\}$ be the true label of sample $i$, $y_i=0$ for negative sample and $y_i=1$ for positive sample. Let $\hat{y}_i \in \{0,1\}$ be the predicted label by model $\M$, $\hat{y}_i=0$ for negative output and $\hat{y}_i=1$ for positive output. The \emph{precision} on a data set $S$ of size $n$ is
\eq{precision}{
	\F_{\texttt{P}} = \frac{ \sum_{i=1}^n y_i \hat{y}_i }{ \sum_{i=1}^n \hat{y}_i }.
}    

It is clear that $\F_\texttt{P}$ can be seen as a RoS metric with $A_i=y_i\hat{y}_i$ and $B_i=\hat{y}_i$. But strictly speaking, $\F_{\texttt{P}}$ is not a completely well-defined metric as its denominator $\sum_i \hat{y}_i$ can be zero. This issue becomes more evident when we try to write its naively-conjugated form $\Fbar_\texttt{P} = \frac{1}{n}\sum_i \frac{y_i\hat{y}_i}{\hat{y}_i}$. For this reason, we turn to consider the \emph{smoothed precision}
\eq{precision_gamma}{
	\F_{\texttt{P}^\gamma} = \frac{ 
		\gamma + \sum_{i=1}^n y_i \hat{y}_i 
	}{ 
		\gamma + \sum_{i=1}^n \hat{y}_i 
	}
}
which is a genuine RoS metric that subsumes the vanilla precision $\F_{\texttt{P}}$ with $\gamma = 0$, and its average form
\eq{}{
	\Fbar_{\texttt{P}^\gamma} = \frac{1}{n}\sum_i \F_{\texttt{P}^\gamma}(i) = \frac{1}{n}\sum_i \frac{\gamma + y_i\hat{y}_i}{\gamma + \hat{y}_i}
} 
is always well defined for $\gamma \neq 0,-1$. 

Unlike accuracy, the (smoothed) precision metrics do not satisfy either of the two equality conditions above, and may suffer from the Simpson's bias \emph{in general}. This is especially true for $\gamma \in [0,1]$ which is the commonly used smoothing constant in existing practice, as Section \ref{sec_exp} will later demonstrate. However, the following theorem shows that the Simpson's bias for smoothed precision may disappear under a \emph{special} (and \emph{unusual}) smoothing term $\gamma^* < 0$, such that the smoothed precision $\F_{P^{\gamma}}$ equals precisely to its conjugate metric $\Fbar_{P^{\gamma}}$ under this special $\gamma^*$.

\begin{theorem} \label{thm_precision_1}
	$\Fbar_{\texttt{P}^\gamma} = \F_{\texttt{P}^\gamma}$ if $\gamma = - \frac{n-\sum_i \hat{y}_i}{n-1}$ and $\sum_i \hat{y}_i \geq 2$. 
\end{theorem}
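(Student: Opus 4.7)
The plan is to exploit the binary nature of $y_i, \hat{y}_i \in \{0,1\}$ to partition the $n$ samples into three categories and reduce both $\F_{\texttt{P}^\gamma}$ and $\Fbar_{\texttt{P}^\gamma}$ to simple rational functions of the counts $k \define \sum_i \hat{y}_i$ and $m \define \sum_i y_i \hat{y}_i$. First I would observe that the per-sample value $\F_{\texttt{P}^\gamma}(\{i\}) = \frac{\gamma + y_i \hat{y}_i}{\gamma + \hat{y}_i}$ takes only three possible values: it equals $1$ whenever $\hat{y}_i = 0$ (there are $n-k$ such samples), equals $1$ whenever $\hat{y}_i = y_i = 1$ (there are $m$ such samples), and equals $\frac{\gamma}{\gamma+1}$ whenever $\hat{y}_i = 1, y_i = 0$ (there are $k-m$ such samples). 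Summing these contributions gives the closed form
\begin{equation*}
n\,\Fbar_{\texttt{P}^\gamma} \;=\; (n - k + m) + (k - m)\cdot\frac{\gamma}{\gamma+1}.
\end{equation*}

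Next I would plug in $\gamma^* = -\frac{n-k}{n-1}$ and compute the two auxiliary quantities $\gamma^* + 1 = \frac{k-1}{n-1}$ and $\frac{\gamma^*}{\gamma^*+1} = -\frac{n-k}{k-1}$. The assumption $k = \sum_i \hat{y}_i \geq 2$ ensures $\gamma^* + 1 \neq 0$, so these expressions and the $\Fbar_{\texttt{P}^\gamma}$ formula above are well defined. Substituting into the closed form and placing everything over the common denominator $k-1$ should collapse $n\,\Fbar_{\texttt{P}^\gamma}$ to $\frac{m(n-1) - (n-k)}{k-1}$ after routine cancellation.

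For the other side, I would separately rewrite the aggregate metric: $\gamma^* + k = \frac{n(k-1)}{n-1}$ and $\gamma^* + m = \frac{m(n-1) - (n-k)}{n-1}$, so
\begin{equation*}
\F_{\texttt{P}^\gamma} \;=\; \frac{\gamma^* + m}{\gamma^* + k} \;=\; \frac{m(n-1) - (n-k)}{n(k-1)}.
\end{equation*}
Comparing this to the simplified form of $\Fbar_{\texttt{P}^\gamma}$ (divided by $n$) would yield the claimed equality.

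There is no real obstacle here beyond bookkeeping; the only subtlety is verifying that the condition $k \geq 2$ is exactly what is needed to keep every denominator nonzero (in particular $\gamma^* \neq -1$, so that the per-sample value $\frac{\gamma^*}{\gamma^*+1}$ used in the third category is finite). I would therefore flag this well-definedness check explicitly before executing the algebraic reduction, since it is the one place where the hypothesis of the theorem enters in an essential way.
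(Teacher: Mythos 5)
Your proposal is correct and follows essentially the same route as the paper: your three-way case analysis on $(\hat{y}_i, y_i)$ is exactly the contingency-table computation the paper isolates as its Lemma for $\Fbar_{\texttt{P}^\gamma}$, after which the paper equates the denominators of $1 - \frac{|\texttt{FP}|}{n(1+\gamma)}$ and $1 - \frac{|\texttt{FP}|}{|\texttt{TP}|+|\texttt{FP}|+\gamma}$ to solve for $\gamma$, whereas you substitute the given $\gamma^*$ and check both sides reduce to $\frac{m(n-1)-(n-k)}{n(k-1)}$ --- the same algebra run in the opposite direction. One minor remark: besides $\gamma^* \neq -1$ (ruled out by $k \geq 2$), the case $k = n$ gives $\gamma^* = 0$, which the paper handles explicitly; in your decomposition it is harmless only because the $\hat{y}_i = 0$ category (whose per-sample value of $1$ requires $\gamma \neq 0$) is then empty, a point worth stating alongside your well-definedness check.
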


\noindent
More importantly, there turns out to be also a special smoothing term $\gamma^{\texttt{P}} < 0$, such that the averaged sample-level precision smoothed by this particular $\gamma^{\texttt{P}}$ happens to equal precisely the \emph{original} precision metric $\F_\texttt{P}$. 

\begin{theorem} \label{thm_precision_2}
	$\Fbar_{\texttt{P}^\gamma} = \F_{\texttt{P}^0}$ if $\gamma = \frac{\sum_i \hat{y}_i}{n} - 1$.
\end{theorem}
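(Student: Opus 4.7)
The plan is to exploit the binary nature of $\hat{y}_i$ and directly compute $\Fbar_{\texttt{P}^\gamma}$ at the stated value of $\gamma$, then verify it collapses to the ratio-of-sums form $\F_{\texttt{P}^0}$. Since each $\hat{y}_i \in \{0,1\}$, every summand $\frac{\gamma + y_i\hat{y}_i}{\gamma + \hat{y}_i}$ takes just one of two values depending on whether $\hat{y}_i$ is $0$ or $1$, so the sum can be split accordingly.

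First I would introduce shorthand $s \define \sum_i \hat{y}_i$ (the number of predicted positives) and $p \define \sum_i y_i\hat{y}_i$ (the true positives), so that the target identity becomes $\Fbar_{\texttt{P}^\gamma} = p/s$. Splitting the sum over $\{i : \hat{y}_i = 0\}$ and $\{i : \hat{y}_i = 1\}$, the former block contributes $(n-s)\cdot \frac{\gamma}{\gamma} = n-s$ terms each equal to $1$, and the latter block contributes $\sum_{i:\hat{y}_i=1} \frac{\gamma + y_i}{\gamma+1} = \frac{s\gamma + p}{\gamma + 1}$.

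Next I would substitute $\gamma = s/n - 1$, which gives the two clean identities $\gamma + 1 = s/n$ and $s\gamma = s(s-n)/n$. Plugging these in, the positive-prediction block simplifies to $\frac{s(s-n)/n + p}{s/n} = (s-n) + pn/s$. Adding the two blocks gives $(n-s) + (s-n) + pn/s = pn/s$, and dividing by $n$ yields $p/s = \F_{\texttt{P}^0}$, as desired.

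There is no substantive obstacle here; the proof is essentially a short algebraic simplification enabled by the binary values of $\hat{y}_i$. The only care that needs to be taken is well-definedness: the collapse $\gamma/\gamma = 1$ in the $\hat{y}_i = 0$ block requires $\gamma \neq 0$, i.e.\ $s \neq n$, and the denominator $\gamma + 1$ requires $s \neq 0$. Both boundary cases ($s=0$ or $s=n$) make either $\F_{\texttt{P}^0}$ or $\Fbar_{\texttt{P}^\gamma}$ degenerate anyway, so the theorem is meaningful precisely in the non-degenerate regime $0 < s < n$, and the computation above goes through verbatim there.
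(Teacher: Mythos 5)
Your computation is correct and is essentially the same argument as the paper's: the paper also exploits the binary values, but packages the case split as a contingency table, proving a lemma that $\Fbar_{\texttt{P}^\gamma} = 1 - \frac{|\{\hat{y}_i=1,\,y_i=0\}|}{n(1+\gamma)}$ and then matching this against $\F_{\texttt{P}^0} = 1 - \frac{|\texttt{FP}|}{|\texttt{TP}|+|\texttt{FP}|}$ to solve for $\gamma$, whereas you substitute the given $\gamma$ directly; the difference is only in packaging (the lemma is reused for Theorem~1). The one inaccuracy is your dismissal of $s = n$ as degenerate: in that case $\gamma = 0$ but $\hat{y}_i \equiv 1$, so both $\F_{\texttt{P}^0} = p/n$ and $\Fbar_{\texttt{P}^0}$ are perfectly well defined and the identity still holds --- indeed your own calculation covers it verbatim, since the $\hat{y}_i=0$ block is empty and the troublesome $\gamma/\gamma$ term never appears; the paper treats this case explicitly (and also argues for $s=0$, where $\F_{\texttt{P}^0}$ is genuinely ill-defined, so your exclusion there is reasonable).
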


\noindent
\if\aaai1
See the proofs in appendix of our full paper.
\else
See the proofs of Theorem \ref{thm_precision_1} and Theorem \ref{thm_precision_2} in Appendix \ref{sec_proof}.
\fi

According to Theorem \ref{thm_precision_2}, the special smoothing term $\gamma^{\texttt{P}}$ is the negated negative-output-rate of the model $\M$. The theorem says that although the original precision metric does suffer from the Simpson's bias (in the sense that $\bar{F}_{\texttt{P}^0} \neq \F_{\texttt{P}^0}$), the bias can be completely resolved by using the special smoothing term $\gamma^{\texttt{P}}$. Note that $\gamma^{\texttt{P}}$, as a \emph{negative} smoothing term, is outside the typical value range of smoothing-term tuning in previous works (which usually used $\gamma\in [0,1]$).~\footnote{We also remark that the smoothing term was previously only used to make the precision metric well defined on singleton samples, not for solving the Simpson's bias.}  

\vspace{0.1in}
Finally, the \emph{recall} metric is symmetrically defined as $\F_\texttt{R} = \frac{\sum_i y_i\hat{y}_i}{\sum_i y_i}$, thus all the observations about precision as discussed also symmetrically apply to recall. In particular, we have $\Fbar_{\texttt{R}^\gamma}=\F_{\texttt{R}}$ for $\gamma=\gamma^\texttt{R} = \sum_i y_i / n -1$.

\paragraph{Scenario 1.c: Dice Coefficient} 
Dice similarity coefficient (DSC) is a measure to gauge the similarity of two overlapped (sub-)sets. In binary recognition problems, DSC is used as a performance metric that combines precision and recall.

Specifically, the DSC metric is the harmonic mean of precision and recall. Following the same formulation with Scenario 1.b, we can write
\eq{dsc}{
	\F_{\texttt{DSC}}(S) = \frac{2 \cdot \F_\texttt{P}(S) \cdot \F_\texttt{R}(S)}{\F_\texttt{P}(S) + \F_\texttt{R}(S)}
	= \frac{ \sum_{i=1}^n 2 y_i\hat{y}_i }{ \sum_{i=1}^n (y_i+\hat{y}_i) }
,} 
which is a RoS metric with $A_i = 2y_i\hat{y}_i$ and $B_i=y_i + \hat{y}_i$. We can also similarly generalize DSC to smoothed variant
\eq{dsc_gamma}{
	\F_{\texttt{DSC}^\gamma}(S) = \frac{ 
		\gamma + \sum_{i=1}^n 2 y_i\hat{y}_i  
	}{ 
		\gamma + \sum_{i=1}^n (y_i+\hat{y}_i) 
	}
,}
which has conjugated average-form
\eq{dsc_fbar}{
\Fbar_{\texttt{DSC}^\gamma} = \frac{1}{n}\sum_i \G_{\texttt{DSC}^\gamma}(i) = \frac{1}{n} \sum_i \frac{\gamma + 2y_i\hat{y}_i}{\gamma + y_i + \hat{y}_i}
}

The following theorem shows an interesting connection between DSC and accuracy. 
\if\aaai1
See the proofs in appendix of our full paper.
\else
See the proofs in Appendix \ref{sec_proof}.
\fi

\begin{theorem} \label{thm_dsc}
	$\Fbar_{\texttt{DSC}^\gamma}(S) = 1 - \frac{ |\{y_i\neq \hat{y}_i\}| }{ (1+\gamma)n }$ for $\gamma\neq 0,-1,-2$.
\end{theorem}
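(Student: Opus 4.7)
The plan is to proceed by case analysis on the four possible joint values of the binary labels $(y_i,\hat{y}_i)$, exploiting the fact that $y_i,\hat{y}_i\in\{0,1\}$ so the summand of $\Fbar_{\texttt{DSC}^\gamma}$ in \eqref{dsc_fbar} takes only a handful of values.

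First I would evaluate the per-sample quantity $\frac{\gamma + 2y_i\hat{y}_i}{\gamma + y_i + \hat{y}_i}$ in each of the four cases. When $(y_i,\hat{y}_i)=(0,0)$ the fraction is $\gamma/\gamma=1$ (well-defined provided $\gamma\neq 0$); when $(y_i,\hat{y}_i)=(1,1)$ the fraction is $(\gamma+2)/(\gamma+2)=1$ (well-defined provided $\gamma\neq -2$); and in the two mismatch cases $(0,1)$ and $(1,0)$ the fraction is $\gamma/(\gamma+1)$ (well-defined provided $\gamma\neq -1$). The three exceptional values of $\gamma$ excluded in the theorem statement are precisely those that make one of the $n$ summands a $0/0$ expression, so outside these values every term is unambiguous.

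Next I would collect terms. Letting $k=|\{i: y_i\neq\hat{y}_i\}|$, the case analysis above yields
\begin{equation*}
\Fbar_{\texttt{DSC}^\gamma}(S) \;=\; \frac{1}{n}\Bigl[(n-k)\cdot 1 \;+\; k\cdot\frac{\gamma}{\gamma+1}\Bigr].
\end{equation*}
The final step is purely algebraic: combine over the common denominator $n(\gamma+1)$ and simplify. Writing $(n-k) + k\gamma/(\gamma+1) = n - k\bigl(1-\gamma/(\gamma+1)\bigr) = n - k/(\gamma+1)$, dividing by $n$ gives the claimed identity
\begin{equation*}
\Fbar_{\texttt{DSC}^\gamma}(S) \;=\; 1 \;-\; \frac{k}{(1+\gamma)n} \;=\; 1 \;-\; \frac{|\{y_i\neq \hat{y}_i\}|}{(1+\gamma)n}.
\end{equation*}

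There is no genuine obstacle here; the argument is a two-line case split followed by simplification. The only subtle point worth flagging in the write-up is that the three excluded values of $\gamma$ are exactly the ones that cause an indeterminate $0/0$ in some summand (depending on which of the four label configurations appears in $S$), which explains why the excluded set is $\{0,-1,-2\}$ and why the formula is meaningful everywhere else, including at negative $\gamma$ of the kind suggested by Theorem~\ref{thm_precision_2}.
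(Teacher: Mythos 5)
Your proof is correct and follows essentially the same route as the paper: a contingency-table case split on $(y_i,\hat{y}_i)\in\{0,1\}^2$ showing the per-sample value is $1$ on matches and $\gamma/(1+\gamma)$ on mismatches, followed by the same averaging algebra. (Only trivial quibble: at $\gamma=-1$ the mismatch term is a division by zero with nonzero numerator rather than a $0/0$ form, but this does not affect the argument.)
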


When $\gamma \approx 0$, the right-hand side of Theorem \ref{thm_dsc} is very close to the value of accuracy. So, it turns out that averaging the \emph{nearly un-smoothed} sample-level DSC gives us the corpus-level accuracy: $\Fbar_{\texttt{DSC}^\gamma} \approx \F_{\texttt{AC}}$ for $\gamma\approx 0$. In other words, Theorem \ref{thm_dsc} implies that the original DSC metric $\F_\texttt{DSC}$ (which is approximately $\F_{\texttt{DSC}^\gamma}$ with $\gamma \approx 0$, see \eqref{dsc_gamma}) does not only have the Simpson's bias, but the bias in this metric is so significant that its average-form conjugate $\Fbar_{\texttt{DSC}^\gamma}$ with $\gamma \approx 0$ has been completely distorted towards another metric (i.e. towards accuracy $\F_{\texttt{AC}}$). 
 
\if\aaai1 
\else
Moreover, Theorem \ref{thm_dsc} further implies that the Simpson's bias in DSC cannot be resolved by any smoothing term $\gamma$. Specifically, the theorem asserts that the smoothed averaged DSC $\Fbar_{\texttt{DSC}^\gamma}$ is monotonic to the error rate $\frac{ |\{y_i\neq \hat{y}_i\}| }{n}$ under \emph{any} admissible $\gamma$, which thus is monotonic to correction rate (i.e., accuracy) as well. This means optimizing the average-form DSC under whatever admissible smoothing term $\gamma$ will be equivalent to optimizing just the accuracy. In other words, in any binary recognition problem where the DSC metric is preferred over accuracy, the (potential) advantage of direct DSC optimization would be \emph{completely} offset by the Simpson's bias, no matter how we tune the smoothing constant.  
\fi

\subsection{Special case 2: Macro-F1} 

The DSC metric can be further extended to \emph{multi-class classification} problems, in which the model $\M$ is asked to classify each sample input $x_i$ into one of $K$ predefined classes. The ground-truth label $y_i\in \{0,1\}^K$ is a categorical variable whose $k$-th component $y_{i,k}$ is $1$ if sample $i$ is from class $k$, otherwise $y_{i,k}=0$. The decision of the model is similarly encoded by a one-hot vector $\hat{y}_i = \texttt{hardmax}(p_i) \in \{0,1\}^K$, where $p_i=\M(x_i)\in [0,1]^K$ is the model output under $x_i$. 

For given class $k$, the decision of the model is making binary recognition on the particular class $k$, thus all the metrics discussed so far applies in a per-class sense. Specifically, the model's \emph{precision for class $k$} is $P_k(S) = \frac{\sum_i y_{i,k} \cdot \hat{y}_{i,k}}{\sum_i \hat{y}_{i,k}}$, and its \emph{recall for class $k$} is $R_k(S) = \frac{\sum_i y_{i,k} \cdot \hat{y}_{i,k}}{\sum_i y_{i,k}}$. The \emph{DSC for class $k$} is, accordingly, $DSC_k(S) = \frac{\sum_i 2 \cdot y_{i,k} \cdot \hat{y}_{i,k}}{\sum_i y_{i,k} + \hat{y}_{i,k}}$. The $\emph{F1 score}$ of the model is the mean DSC value averaged over all classes,~\footnote{
	\eqref{f1} is usually called \emph{Macro-F1}, although the same name was also used for a similar but different metric~\cite{opitz2019macro}. Other F1 variants also exist, such as Micro-F1. \eqref{f1} is the evaluation metric used in tasks that we will experimentally examine later. 
} 
denoted as
\eq{f1}{
	\F_{1}(S) = \frac{1}{K} \sum_{k=1}^K DSC_k(S) = \sum_k \frac{\sum_i 2 \cdot y_{i,k} \cdot \hat{y}_{i,k}}{\sum_i y_{i,k} + \hat{y}_{i,k}} / K
} 

The F1 metric is a linear sum of several RoS metrics, but itself is not a RoS metric. The corresponding (smoothed) average-form F1 is
\eq{fbar_1}{
	\Fbar_{1}^\gamma(S) = \frac{1}{n} \sum_{i=1}^n \F_{1}^\gamma(\{i\}) = 
	\sum_i \sum_k \frac{
		\gamma + 2 \cdot y_{i,k} \cdot \hat{y}_{i,k}
	}{
		\gamma + y_{i,k} + \hat{y}_{i,k}
	} / Kn
.}
From Theorem \ref{thm_dsc} we know that the average-form F1 (that is, $\Fbar_{1}^\gamma$ with $\gamma\approx 0$) is equivalent to an ``mean-accuracy-over-class'' metric, which is different from the aggregated F1 metric (and is different from the multi-class accuracy metric actually used in multi-classification tasks too). 

Despite the Simpson's bias in F1 as discussed, the average-form F1\eqref{fbar_1} has inspired \citet{milletari2016v} to introduce the \emph{Dice Loss}, defined as 
\eq{dice}{
\Fbar_{\texttt{DL}}(S) =  \frac{1}{n}\sum_i \G_{\texttt{DL}}(i) = \sum_i \sum_k \frac{
	\gamma + 2 \cdot y_{i,k} \cdot p_{i,k}
}{
	\gamma + y_{i,k}^2 + p_{i,k}^2
} / Kn	
.}

Besides the differentiablization trick, the Dice loss \eqref{dice} further uses the squared terms $y_{i,k}^2$ and $p_{i,k}^2$ in denominator for faster training. \citet{DBLP:conf/acl/LiSMLWL20} has proposed to adopte the Dice loss to train models in a number of NLP tasks.

\subsection{Special case 3: BLEU} \label{sec:bleu} 

BLEU is a widely used evaluation metric in machine translation(MT) and question answering (QA). Given a parallel corpus $S$ consisting of $n$ sentence pairs $(X\i, Y\i)$, $X\i$ being the source sentence and $Y\i$ a reference translation, the MT model $\M$ will generate a translation $\Yhat{i}{}$ for each $i\in \{1 \dots n\}$. The BLEU score of the model $\M$ on such a data set $S$ is defined as $\texttt{BLEU}(S;\M) =$

\vspace{-0.1in} \small
\eqmx{
\mathbf{GM}_{k=1}^4 \left( \frac{\sum_i H_k\i}{\sum_i L_k\i} \right)
	\cdot \min \left( \exp \Big( 1-\frac{\sum_i M_1\i}{\sum_i L_1\i} \Big) ~,~ 1 \right)
}
\normalsize \noindent
where $L_k\i$ is the total number of n-grams of length $k$ in $\Yhat{i}{}$, $H_k\i$ is the number of ``matched'' n-grams of length $k$ in $\Yhat{i}{}$, $M_1\i$ is the total number of 1-grams in $Y\i$, and $\mathbf{GM}_{k=1}^4$ means taking the geometric mean over $k=1,2,3,4$.

To subsume the BLEU metric into our framework, define

\vspace{-0.1in}
\small
\eqm{f_bleu}{
	&\F_{\texttt{BLEU}}(S;\M) 
	\\=~& \log \texttt{BLEU}(S;\M) - 1 
	\\=~& \frac{1}{4}\log \Big( \frac{\sum_i H_1\i}{\sum_i L_1\i} \Big) + 
	\frac{1}{4}\log \Big( \frac{\sum_i H_2\i}{\sum_i L_2\i} \Big) +
	\frac{1}{4}\log \Big( \frac{\sum_i H_3\i}{\sum_i L_3\i} \Big) \\&+ 
	\frac{1}{4}\log \Big( \frac{\sum_i H_4\i}{\sum_i L_4\i} \Big) -
	\max\Big( \frac{\sum_i M_1\i}{\sum_i L_1\i}, 1 \Big)
}
\normalsize \noindent
which is equivalent to the exact BLEU metric in terms of model training. Similar to $\F_1$, the $\F_{\texttt{BLEU}}$ metric is also an aggregation of five RoS sub-metrics. However, different from $\F_1$, the RoS sub-metrics in $\F_{\texttt{BLEU}}$ will each go through a nonlinear transformation before summing over together. 
The corresponding average-form BLEU is

\vspace{-0.1in} \small
\eqm{}{
	\Fbar_{\texttt{BLEU}}(S) 
	&= \frac{1}{n} \sum_i \G_{\texttt{BLEU}}(i) = \frac{1}{n} \sum_i \F_{\texttt{BLEU}}(\{i\}) \\
	&= 
	\frac{1}{n} \sum_i \Big(~ 
		-\max\big(1, \frac{M_1\i}{L_1\i}\big) + \sum_{k=1\dots4} \frac{1}{4} \log \frac{H_k\i}{L_k\i}
	~~\Big)  
.}
\normalsize \noindent
Note that in $\Fbar_{\texttt{BLEU}}$, a sample is a sentence, and the metric computes a \emph{sentence-level BLEU} score~\cite{chen2014systematic} for each sentence $i$, then takes the arithmetic mean over all sentence-level scores. Sentence-level training could be conducted based on$\Fbar_{\texttt{BLEU}}$, as have been explored by many authors~\cite{2016:ranzato,DBLP:conf/acl/ShenCHHWSL16,wu2016google,2017:bahdanau,2018:wu,2018:edunov}, \emph{if} the sentence-averaged BLEU indeed serves as a good proxy to the true evaluation metric $\F_{\texttt{BLEU}}$, a presumption that we will experimentally examine in later sections.

\section{Connections to Simpson's Paradox}
Our naming of the bias between corpus-level metric $\F$ and its average-form conjugate $\Fbar$ is largely inspired by its connection with the famous notion, \emph{Simpson's reversal paradox}, which we will explain in this section. 

\emph{Simpson's reversal} often refers to the statistical observation that a candidate method/model is better in each and every case, but is worse in terms of the overall performance. For example, let $\M_1$ be a new medical treatment that is better than the baseline method $\M_0$ in terms of survival rate $\F$ for both the group of male patients and the group of female patients, it turns out that $\M_1$ could have a lower survival rate than $\M_0$ for the combined group of all patients, as famously shown by~\citet{blyth1972simpson}. 


Many people feel surprising, or even paradoxical, when they observe the Simpson's reversal. \citet{blyth1972simpson} was the first to call this phenomenon, \emph{Simpson's paradox}, named after Edward H. Simpson for his technical notes~\cite{simpson1951interpretation} that proposed to study the phenomenon more carefully. On the other hand, Simpson's reversal, as a mathematical fact, is not too rare in real-world experiences. \citet{2009:likely} show that the reversal occurs in about 2\% of all the possible 2x2x2 contingency tables. It is then interesting to ask why people consider a not-so-uncommon phenomenon psychologically surprising -- the paradoxical feeling appears to suggest some deeply held conviction in people's mind that the Simpson's reversal has clashed with. 

The \emph{sure-thing principle} has been hypothesized to be such a contradictory conviction behind the Simpson's paradox~\cite{2014:pearl}, which validly asserts that a method that helps in every case must be beneficial in terms of the \emph{averaged} performance under any mixture distribution. In the medical example above, for instance, the new method $\M_1$ improves survival rate for both males and females, which by the sure-thing principle does entail that $\M_1$'s average survival rate under any \emph{given} gender ratio must improve. However, it is often overlooked that the \emph{aggregated} survival rate of a method (over both males and females) is \emph{not} a simple average of its per-gender survival rate, but depends on the specific gender ratio that the method is facing (which may vary between methods). 
People might feel the Simpson's reversal paradoxical if they overlooked the difference between the averaged performance and the aggregated performance, in which case the observed reversal clashes with the sure-thing principle in the observer's mind. 

We argue that this often-overlooked disparity between average and aggregate performances, as possibly the real crux behind the Simpson's paradox, \emph{is} indeed sometimes overlooked in the context of NLP training, not only regarding its existence, but also regarding its impact to the training. Given presence of this disparity, a model that is better in terms of averaged per-sample performance could turn out to be worse in terms of the aggregate performance measured by applying the same evaluation metric to the whole data set directly. This reversal in ranking NLP models (or model parameters) can not only lead to biases in the gradient estimation for SGD (which is based on the average performance), causing inefficiency or failure to optimize the model towards better aggregate performance, but more severely, can cause the training to land in sub-optimal solutions (in terms of aggregate performance) even if an oracle optimization procedure is given (which can at its best maximize the average performance). As both the aforementioned issue in model training and the classic Simpson's paradox in statistical sciences are fundamentally rooted from the disparity between two different ways to compute the same metric (averaged or aggregated), we call this disparity, the \emph{Simpson's bias}, so as to highlight the intrinsic connections between the two. 

\if\aaai1
\else
For completion we remark that there is another paradox about the Simpson reversal when we have to make decisions based on the reversed result -- sometimes it feels reasonable to consult the aggregate measurement while in other scenarios the per-case measurement is the one we want to resort to. This is a different paradoxical experience from the ``Simpson's paradox'' that we have discussed above: One occurs when we merely \emph{observe} the reversal, the other occurs when we go on trying to \emph{use} the reversal data. For clarity we will call the former, \emph{Simpson's Reversal Paradox} (SRP), while call the latter, \emph{Simpson's Decision Paradox} (SDP). 
There is an active AI community that study SDP from a causal perspective~\cite{2014:pearl}. Their causal framework also helps explain \emph{why} people often overlooked the Simpson's bias behind SRP. 

We stress, however, that the SDP literature is less relevant to our paper where we focus only on SRP. 
On the other hand, the causal explanation on SRP is complementary to our paper where we point out that the perhaps causally-rooted (or for whatever reason) tendency to overlook the Simpson's bias may not only induce the Simpson's Reversal Paradox in statistical sciences, but may also lead to undesired results in ML/NLP. 
\fi

\section{Experiments}
\label{sec_exp}

This section experimentally studies (1) how \emph{significant} the Simpson's bias can be in standard NLP benchmarks and (2) how the bias \emph{affects} the NLP training in those benchmarks.  In the following, we report observations about these two questions in three common NLP tasks: Paraphrase Similarity Matching (PSM), Named Entity Recognition (NER) and Machine Translation (MT).

\begin{figure*}[t]
	\begin{subfigure}{.25\textwidth}
		\centering
		\includegraphics[trim={0cm 0cm 0cm 0cm},clip,scale=0.14]{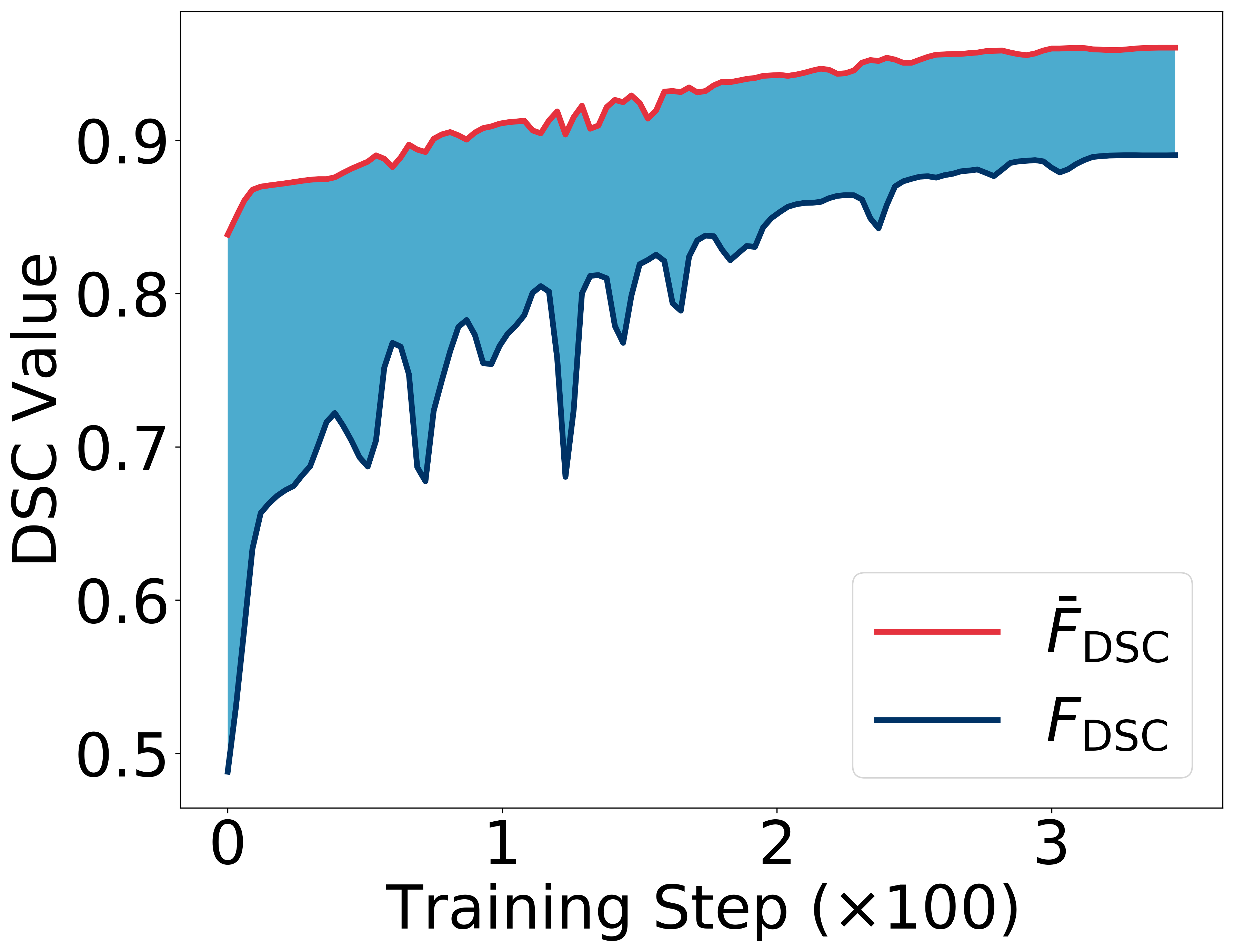}
		\caption{PSM-MRPC}
		\label{fig:1-1}
	\end{subfigure}%
	\begin{subfigure}{.25\textwidth}
		\centering
		\includegraphics[trim={0cm 0cm 0cm 0cm},clip,scale=0.14]{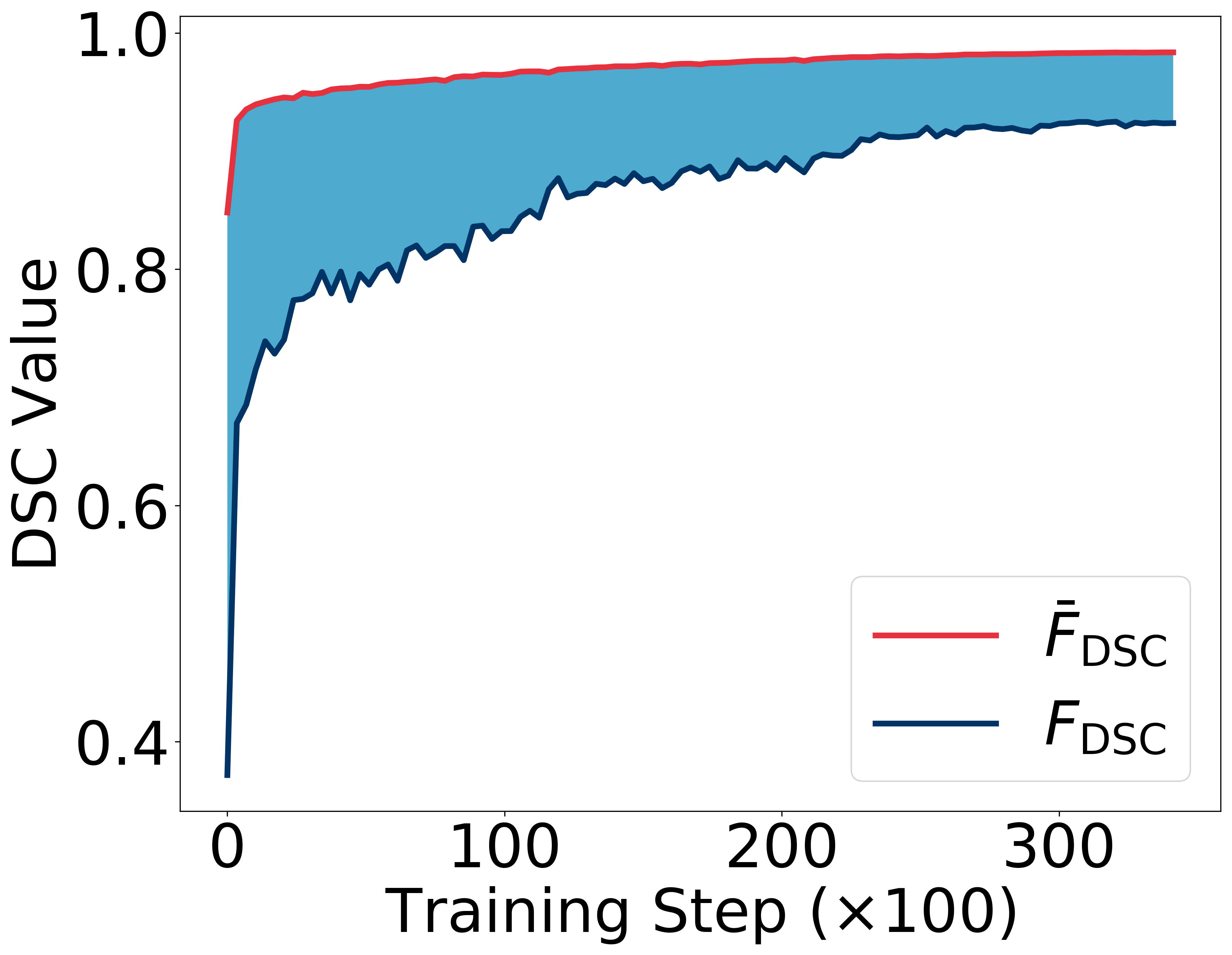}
		\caption{PSM-QQP}
		\label{fig:1-2}
	\end{subfigure}%
	\begin{subfigure}{.25\textwidth}
		\centering
		\includegraphics[trim={0cm 0cm 0cm 0cm},clip,scale=0.14]{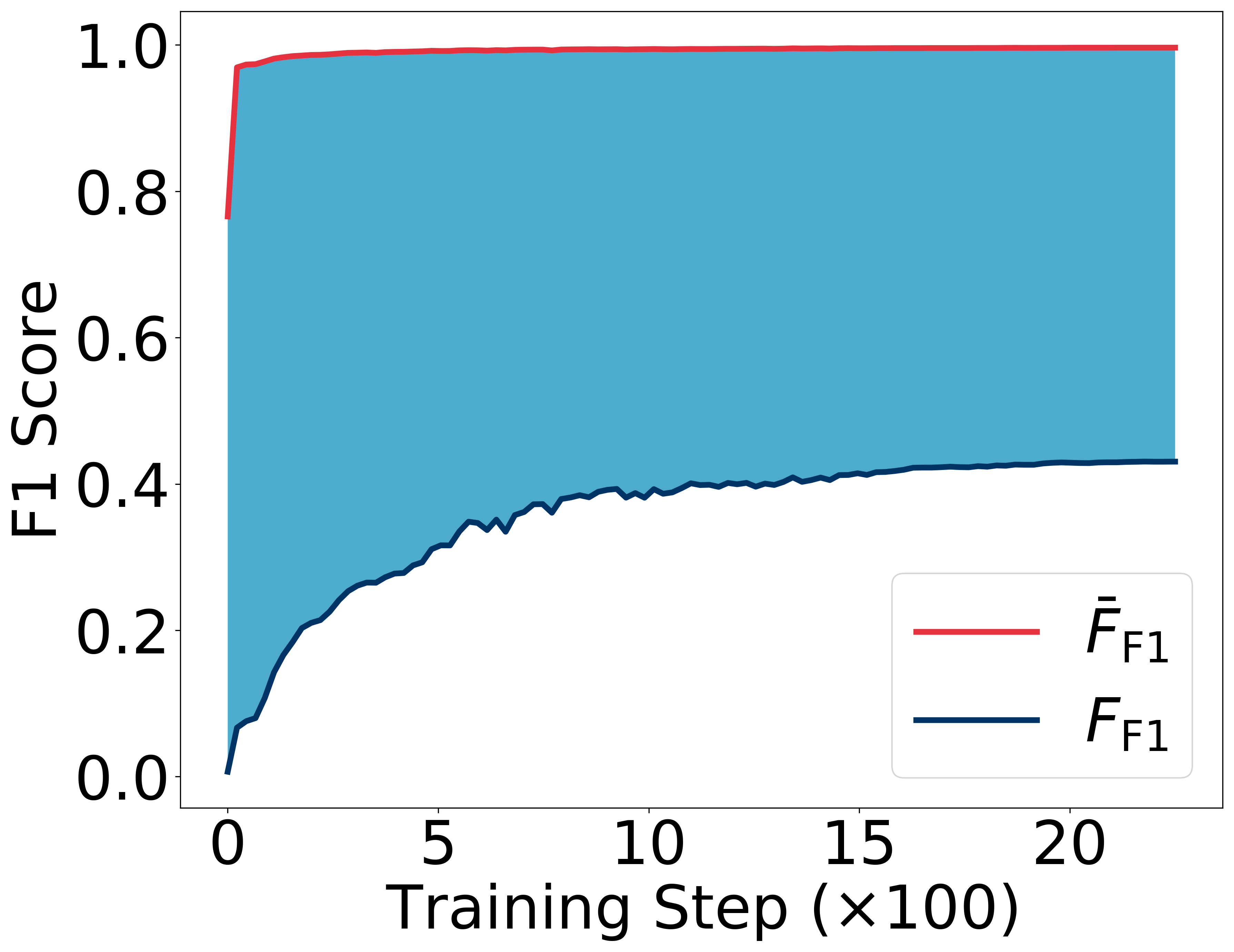}
		\caption{NER}
		\label{fig:1-3}
	\end{subfigure}%
	\begin{subfigure}{.25\textwidth}
		\centering
		\includegraphics[trim={0cm 0cm 0cm 0cm},clip,scale=0.14]{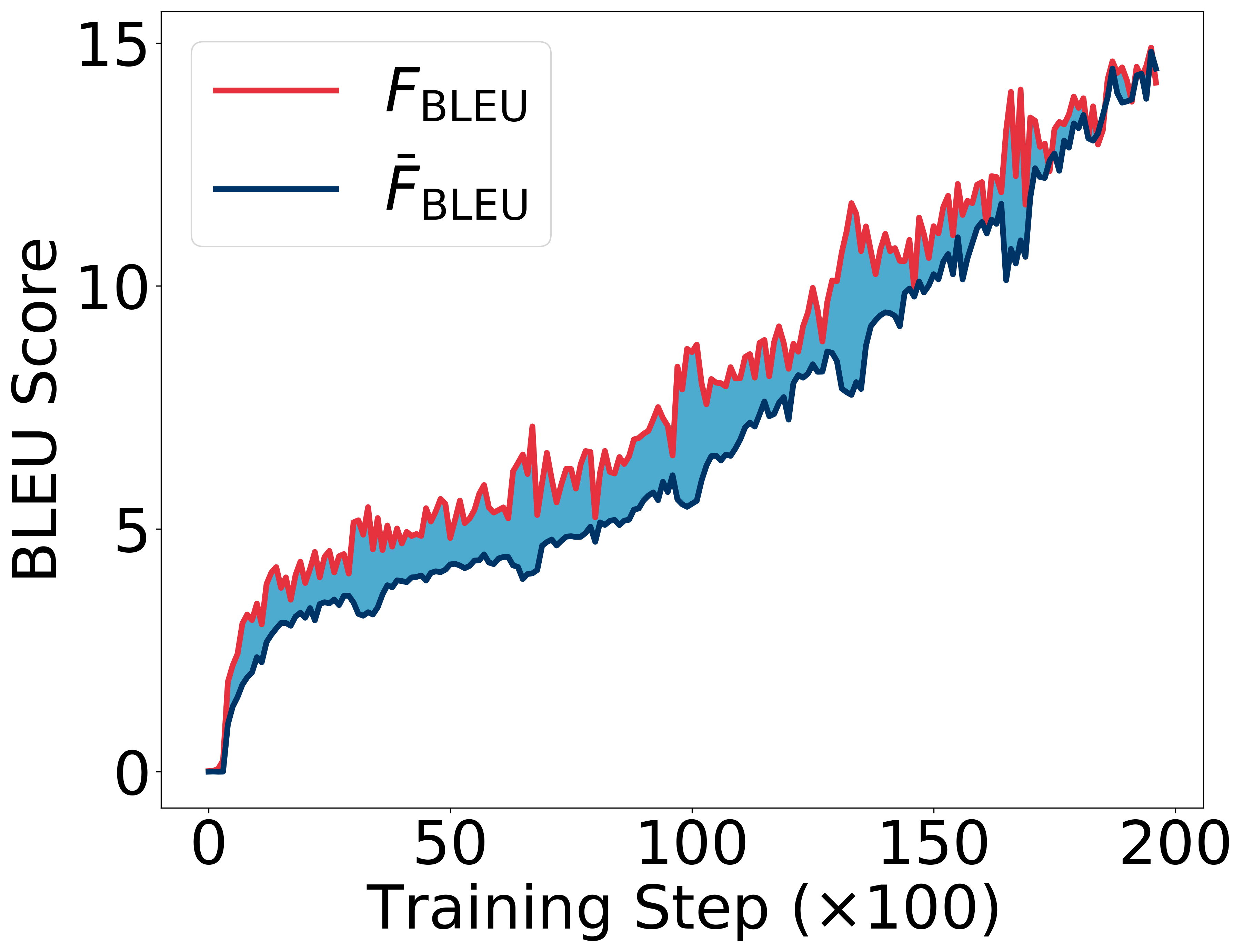}
		\caption{MT}
		\label{fig:1-4}
	\end{subfigure}%
	\caption{The Simpson's bias in NLP training. For PSM or NER task, we observe the Simpson's bias change over time during the model with Dice loss training. On the MT task, we use the model trained by CE loss to observe the bias change. Note that, the model is not necessarily trained with $\Fbar$.}
	\label{fig:simpsons-bias}
\end{figure*}

\subsection{Experiment Design}
The first question is relatively easy to address. Let $\M$ be a NLP model trained for a task with training corpus $S$ and testing metric $\F$, the significance of the Simpson's bias of $\F$ on model $\M$ is denoted by

\begin{equation}
\epsilon(\M) = |\F(S;\M) - \Fbar(S;\M)|
\end{equation}
where $\Fbar$ is the average-form metric corresponding to $\F$. Note that model $\M$ is not necessarily trained with $\Fbar$, but we can generally measure the Simpson's bias between $\F$ and $\Fbar$ on an arbitrary model. In our experiments, we will measure the bias $\epsilon$ in various tasks with various metrics $\F$, and on models trained with various loss functions under various hyper-parameter and pre-processing settings. 

The second question, i.e. to measure the impact of the Simpson's bias, is more tricky. Ideally, one would want to directly compare the performances (in terms of $\F$) between models trained with sample-level objective $\bar{F}$ and those trained with corpus-level objective $\F$. However, a key obstacle here is that we cannot easily compute/estimate the gradient of the corpus-level objective $\F$ (over any corpus beyond modest size) to optimize it, which is exactly why people turned to the sample-level objective $\Fbar$ in the first place. 
In our experiments we instead observe the impact of Simpson's bias to NLP training from three indirect perspectives. 

First, we seek to observe how consistent $\F$ and $\Fbar$ can be when used to compare a given pair of models. Such a model pair essentially serves as a highly degenerate model/parameter space (of size $2$), over which we want to see if the optimum of $\Fbar$ is also the optimum of $\F$. In this paper we focus on comparing pairs of models obtained from consecutive learning steps in a training process. For a learning step $t$, we measure the changing directions at $t$ by calculating the $\Delta\F^t$ and $\Delta\bar{F}^t$ according to:
\begin{equation}
\begin{split}
\Delta\F^t = \F^t - \F^{t-1} \\
\Delta\bar{F}^t = \bar{F}^{t} - \bar{F}^{t-1} 
\end{split}
\end{equation}

The sign of $\Delta\F^t$ or $\Delta\bar{F}^t$ represents the changing direction. $\Delta\F^t \cdot \Delta\bar{F}^t >0$ indicates that $\F$ and $\bar{F}$ are consistent in evaluating the models at $t$ and $t-1$. $\Delta\F^t \cdot \Delta\bar{F}^t \leq 0$ suggests that $\F$ and $\bar{F}$ have changed in opposite directions in step $t$, indicating inconsistent model evaluation. We call such an inconsistent $(\Delta\F^t, \Delta\bar{F}^t)$, a \emph{reversal pair}. If reversal pairs are rare throughout the whole training process, we can say that the changes of $\F$ and $\bar{F}$ are highly consistent. In other words, we can maximize $\F$ by optimizing $\bar{F}$. Alternatively, if there are a large number of reversal pairs, we may at least need a longer time to reach the optimal $\F$. Moreover, a tremendous amount of inconsistent directions increase the risk that $\F$ can be significantly sub-optimal.

Our second experiment to observe the impact of Simpson's bias is to compare models trained with $\Fbar$ to those trained with the standard CE loss. In particular, some previous NLP works, such as ~\citet{DBLP:conf/acl/LiSMLWL20}, proposed to replace the CE loss with smoothed Dice loss for imbalanced data sets due to its similarity to the F1 metric. Instead of asking if models thus trained are competitive to those trained directly with F1, we ask: \emph{How much can the models trained with Dice loss (at least) outperform those with CE loss?} As our theoretical analysis (Theorem \ref{thm_dsc} in particular) has pointed out, optimizing smoothed average-form DSC is actually equivalent to optimize the accuracy. 
One may then expect comparable learning results between smoothed Dice loss and CE loss. If this were indeed the case, it would indirectly indicate that the models trained with Dice loss (corresponding to $\Fbar$) might be substantially sub-optimal in F1 (corresponding to $\F$), assuming that the CE loss (which is not F1-oriented) cannot fully optimize F1 (which was the general premise to consider conjugated loss at all).    

Our third experiment on the impact of Simpson's bias is to examine the correlation between the bias and the training quality (in varying training settings). If high significance-of-bias is correlated with low training quality, it may \emph{potentially} imply some deeper causal relationships between the two.

\subsection{Dataset and Setting}

For PSM, we use two standard data sets: Microsoft Research Paragraph Corpus (MRPC)~\cite{dolan2005automatically} and Quora Question Pairs (QQP)~\cite{wang2018glue}. We adopt the pre-trained BERT-base-uncased model with different training objectives (CE and Dice loss). The officially recommended parameter settings~\cite{Wolf2019HuggingFacesTS} are leveraged, including max sequence length=128, epoch number=3, train batch size=32, learning rate=2e-5, and $\gamma$=1.

For NER, we fine-tune BERT base multilingual cased model with different loss function (CE / Dice) on GermEval 2014 dataset~\cite{benikova2014nosta}. Formally, let $S$ be a NER data set consisting of $n$ sentences in total; each sentence has $L$ tokens. We want to train a neural network model that classifies each token $i$ into one of $K$ predefined entity classes. In the experiment, we use the same setting as \citet{Wolf2019HuggingFacesTS}, including max sequence length=128, epoch=3, lr=5e-5, batch size = 32, $\gamma=1$ and the Dice loss is $1 - \Fbar_{\texttt{F1}}$, where $\Fbar_{\texttt{F1}}$ refers to:
\begin{equation}
\small
    \label{eq:loss_12}
    \begin{split}
        \Fbar_{\texttt{F1}} = \frac{1}{Kn}\sum_i^n\sum_k^K\frac{\sum_j^L{2 \cdot p_{i,j,k} \cdot \indicator{y_i=k}} + \gamma}{\sum_j^L{(p_{i,j,k}^2 + \indicator{y_i=k}^2)} + \gamma}
    \end{split}
\end{equation}

There is an alternative Dice loss $1 - \Fbar'_{\texttt{F1}}$, where $\Fbar'_{\texttt{F1}}$ is defined as:
\begin{equation}
\small
    \label{eq:loss_14}
    \begin{split}
        \Fbar'_{\texttt{F1}} = \frac{1}{KnL}\sum_i^n\sum_j^L\sum_k^K\frac{2 \cdot p_{i,j,k} \cdot \indicator{y_i=k} + \gamma}{p_{i,j,k}^2 + \indicator{y_i=k}^2 + \gamma}
    \end{split}
\end{equation}

Both \eqref{eq:loss_12} and \eqref{eq:loss_14} correspond to dice loss, but \eqref{eq:loss_12} uses the ``standard'' method that classifiers as many entity phrases in a sentence as possible, while \eqref{eq:loss_14} is a variant of \eqref{eq:loss_12} that independently classifies each token, and thus obviously induces the Simpson's bias to \eqref{eq:loss_12}.

This dice loss is in ill condition. Since every sentence in the dataset has not the same number of words, the padding is necessary. Ideally, padding makes no or almost no contribution to the training objective, however, in \eqref{eq:loss_14} the effect of padding is the same as that of negative examples in the dataset without additional processing. At the same time, the smooth strategy is directly applied to each independent token, resulting in the DSC value of a single negative example changing from 0 to 1. Such a change will make training hard.

For MT, we train a transformer model~\cite{DBLP:conf/nips/VaswaniSPUJGKP17} on \texttt{IWSLT 2016} dataset using the default setting in the original paper, except we hold the learning rate constant as $0.0001$ and set the batch size to $10K$ tokens after padding. 

More details of data and setting
\if\aaai1 
appear in the appendix of our full paper.
\else
appear in the appendix.
\fi

\begin{figure*}[t]
	\begin{subfigure}{.25\textwidth}
		\centering
		\includegraphics[trim={0cm 0cm 0cm 0cm},clip,scale=0.14]{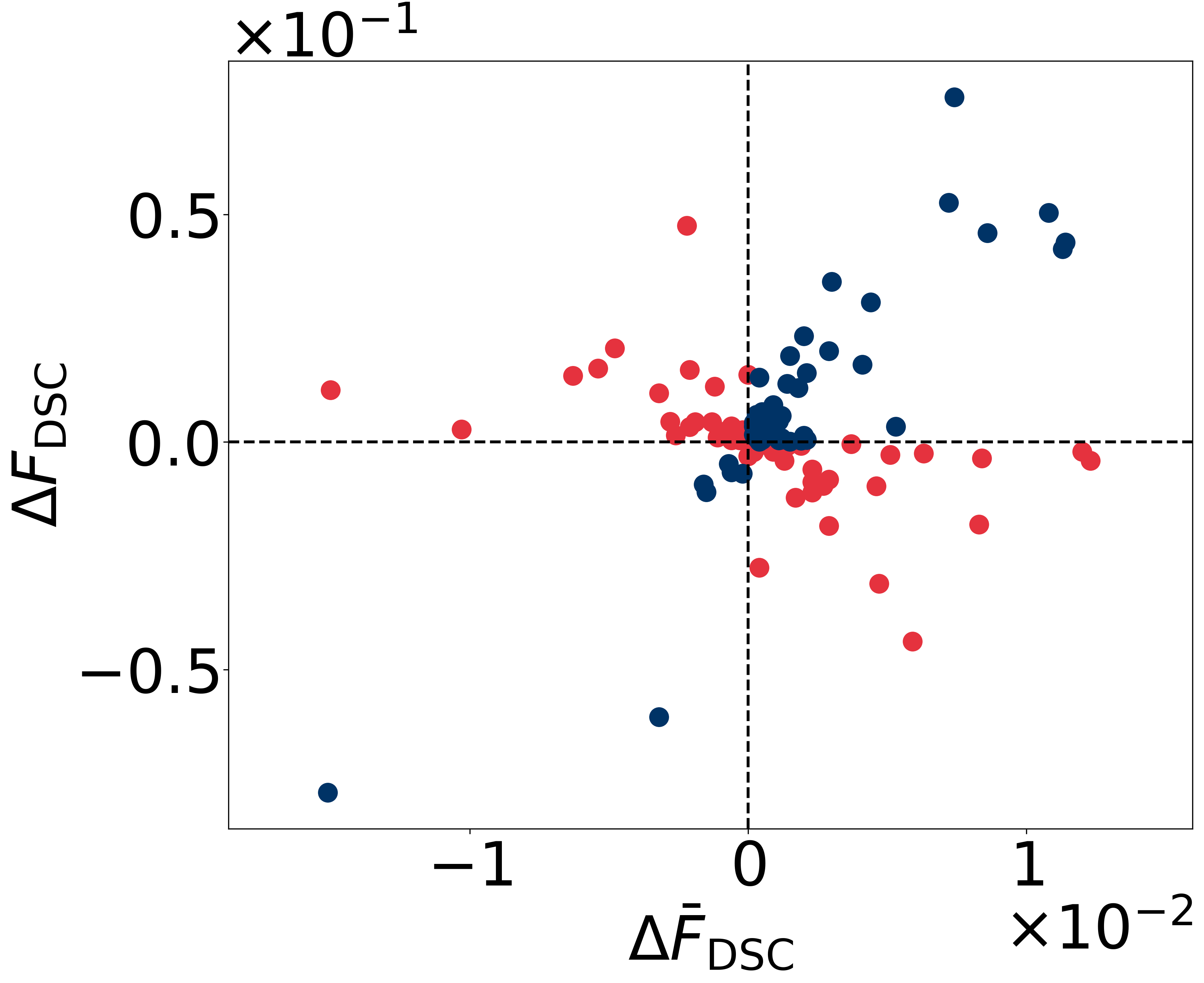}
		\caption{PSM-MRPC}
		\label{fig:2-1}
	\end{subfigure}%
	\begin{subfigure}{.25\textwidth}
		\centering
		\includegraphics[trim={0cm 0cm 0cm 0cm},clip,scale=0.14]{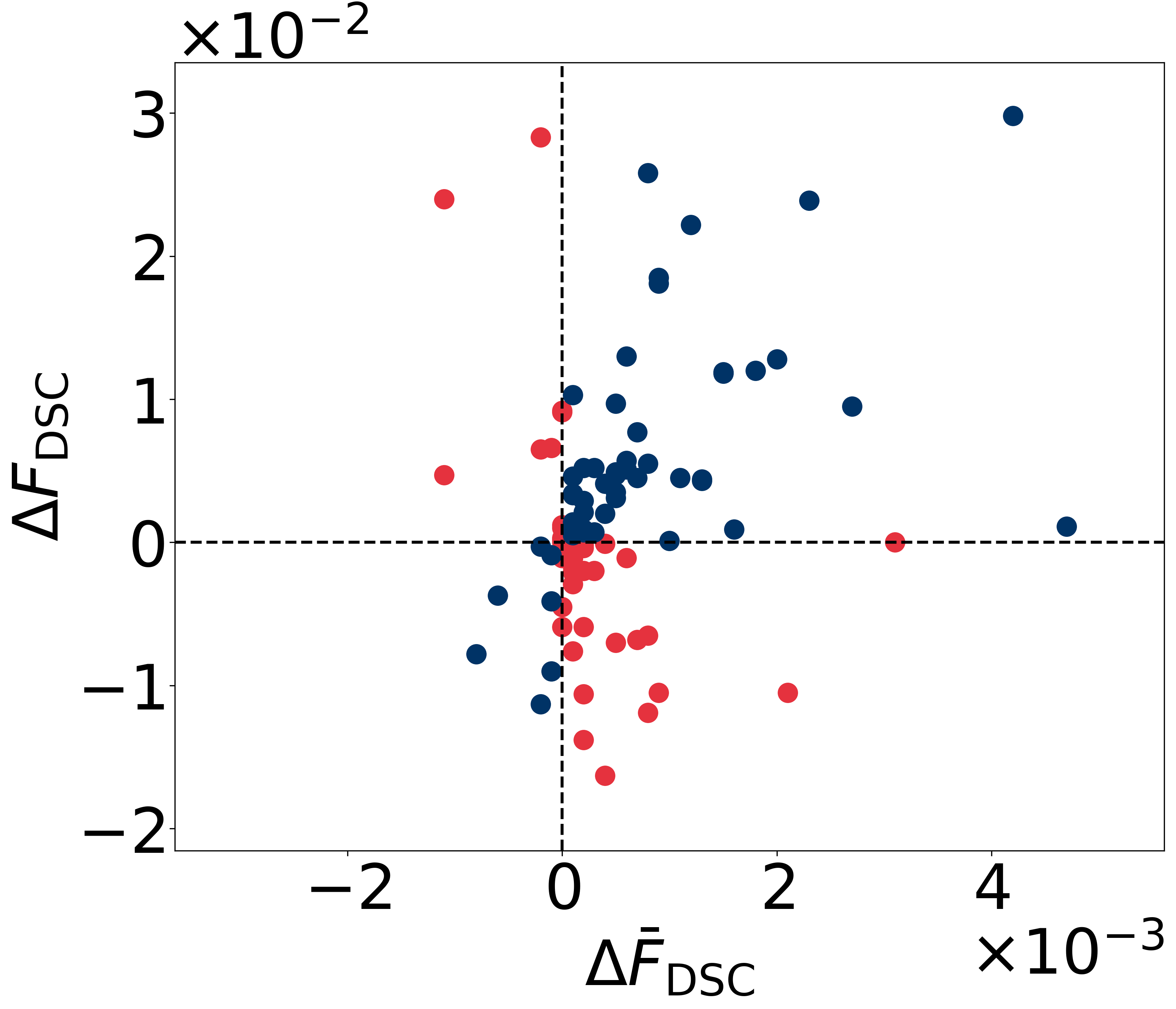}
		\caption{PSM-QQP}
		\label{fig:2-2}
	\end{subfigure}%
	\begin{subfigure}{.25\textwidth}
		\centering
		\includegraphics[trim={0cm 0cm 0cm 0cm},clip,scale=0.14]{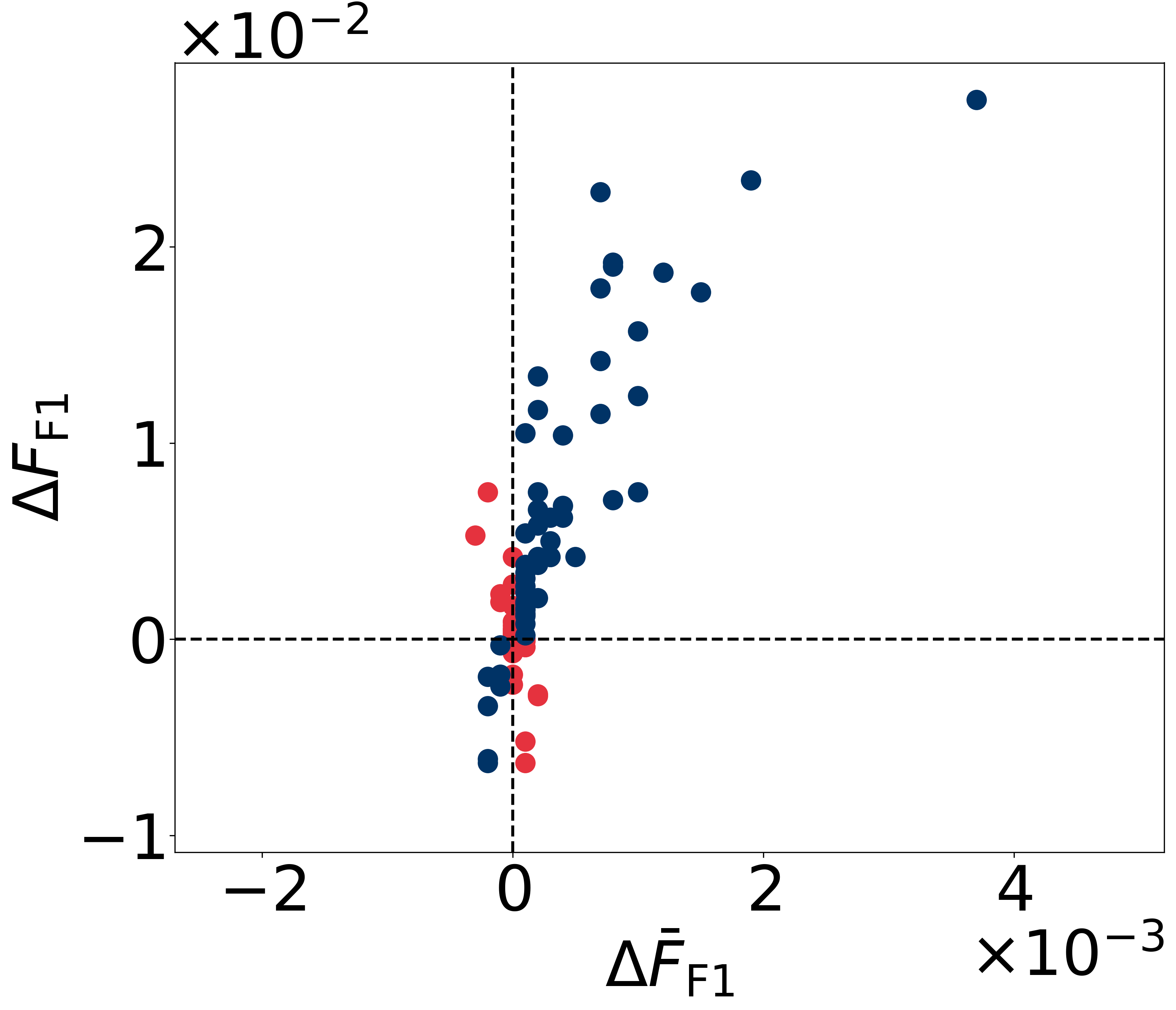}
		\caption{NER}
		\label{fig:2-3}
	\end{subfigure}%
	\begin{subfigure}{.25\textwidth}
		\centering
		\includegraphics[trim={0cm 0cm 0cm 0cm},clip,scale=0.14]{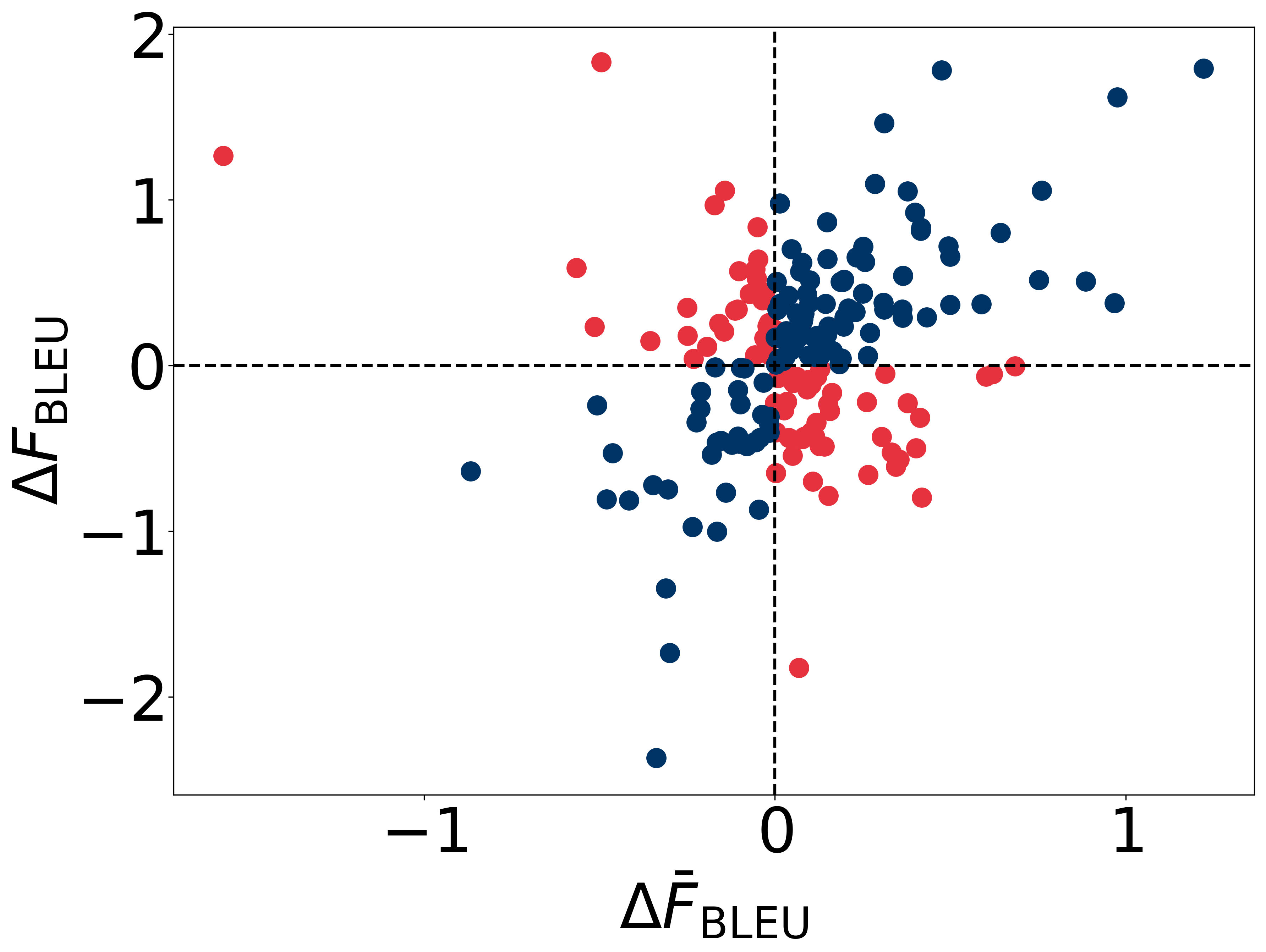}
		\caption{MT}
		\label{fig:2-4}
	\end{subfigure}%
	\caption{Reversal pairs during NLP training.}
	\label{fig:consistency-testing}
\end{figure*}

\subsection{Significance of Simpson's Bias}
For PSM task, Figure \ref{fig:1-1} and Figure~\ref{fig:1-2} show Simpson's bias change overtime during the ``BERT with dice loss($\gamma = 1$)'' training in MRPC/QQP task. As the training progresses, the value gradually decreases, but it still cannot be ignored at the end of the training. For NER task, the Simpson's bias cannot be resolved by $\gamma=1$. Because of the significance of bias between $\Fbar_\texttt{F1}$ and $\F_\texttt{F1}$, it seems $\Fbar_\texttt{F1}$ converges early in Figure~\ref{fig:1-3}, but it is not. Actually, in the whole training process, $\Fbar_\texttt{F1}$ increases rapidly and then changes with small-scale. At this time, $\F_\texttt{F1}$ increases slowly and finally converges to about 0.4. For MT task, Figure~\ref{fig:1-4} shows the changes of the $\F_{\texttt{BLEU}}$ and $\Fbar_{\texttt{BLEU}}$ scores over time during training. As they both increase, we can see clear disparity between them. Through these observations, we find that (1) smooth strategy in these NLP tasks is of limited use for eliminating bias; (2) during the whole training process, the value of bias is significant and cannot be ignored.

\subsection{Impact of Simpson's Bias}
\subsubsection{Consistency testing}
This experiment seeks to observe how consistent $\F$ and $\Fbar$ can be when used to compare a given pair of models. For PSM task, Figure \ref{fig:2-1} and \ref{fig:2-2} show a clear inconsistency between the changes in $F_\texttt{DSC}$ and $\bar{F}_\texttt{DSC}$ on MRPC and QQP task. By tracking the tendency of the DSC value changes at the $F_\texttt{DSC}$ and $\bar{F}_\texttt{DSC}$, we find out of the $115$ training steps, $59$ (or half of them) show an opposite trends between $\Delta\F_\texttt{DSC}$ and $\Delta\bar{F}_\texttt{DSC}$. 46 out of 100 sample dots pairs in Figure~\ref{fig:2-2} has different change directions, the red dots indicate the disparity between $\Delta\F_\texttt{DSC}$ and $\Delta\bar{F}_\texttt{DSC}$. For NER task, there some extreme values in model early training, which reflect the fastest improvements. But the existence of these extreme values hinder our analysis, so it does not exist in Figure~\ref{fig:2-3}. It can be seen from Figure~\ref{fig:2-3}, in most cases, the change directions of $\bar{F}_\texttt{F1}$ and $\F_\texttt{F1}$ are completely inconsistent. For MT task,  we plotted the scattered dots for each $(\Delta\F_{\texttt{BLEU}}, \Delta\Fbar_{\texttt{BLEU}})$ pairs to see whether they both increase or decrease in the same direction. There are $77$ / $195$ sampled dots have different changing directions in total. There are a larger number of reversal pairs on these NLP tasks,  $\F$ may at least need a longer time to reach the optimal. Moreover, the high degree of inconsistency between $\bar{F}$ and $\F$ may increase the difficulty for $\F$ optimization.

\subsubsection{Comparison with CE}

This experiment is to observe the impact of Simpson's bias by comparing models trained with $\Fbar$ to those trained with the standard CE loss. For PSM task, as show in Table~\ref{tab:comparison-with-CE}, BERT trained with the CE loss (a.k.a. $\Fbar_{\texttt{MLE}}$) outperforms the model parameters trained with Dice loss (i.e., BERT + Dice) by a small margin: + 0.78/0.45 in terms of F1 score on MRPC/QQP task. For NER task, as the Table~\ref{tab:comparison-with-CE} shows, the model trained with CE is about 3.53 point higher than that trained with Dice. All the result in Table \ref{tab:comparison-with-CE} indicates the fact that the Dice did not achieve better performance may suggest that it does not necessarily drive the optimization toward high DSC scores, despite of their similarity. And using smoothing constants $\gamma \in [0,1]$ does not work to eliminate Simpson's bias on these tasks. 

\begin{table}[h]
	\centering\small
	\begin{tabular}{lcccc}  
		\toprule
		Loss & MRPC & QQP & NER \\
		\midrule
		CE Loss & 89.78 & 87.84 & 86.14 \\
		Dice Loss & 89.00 & 87.39 & 82.61 \\
		\bottomrule
	\end{tabular}
	\caption{Performance(F1 Score) of various training objective on dev set for MRPC/QQP task, and test set for NER task.}
	\label{tab:comparison-with-CE}
\end{table}

\subsubsection{Impacts on training quality}

We conduct more experiments under different settings to get various $\bar{F}$ variant on MRPC task. No matter how to modify the hyper-parameter, this bias between $\F$ and $\Fbar$ is still significant, there are still a lot of reversed pairs and the performance of the model trained with $\Fbar$ is worse than that of CE. Meanwhile, we find a negative relation between the model quality on train dataset $\mathrm{F1}^\texttt{Dice}_\texttt{train}$ and the significance of bias $\epsilon$. Figure~\ref{fig: bias-vs-quality} is a scatter plot that shows the significance of bias and training quality. As can be seen from the figure, when $\mathrm{F1}^\texttt{Dice}_\texttt{train}$ tends to decrease as $\epsilon$ increases. These experiments results suggest that the Simspon's bias is a common phenomenon in NLP training and not changing with model tuning. 
\if\aaai1
See more discussions in appendix of our full paper.
\else
See more discussions in appendix.
\fi

\begin{figure}[h]
    \centering
	\includegraphics[trim={0cm 1.5cm 0cm 0cm},clip,scale=0.14]{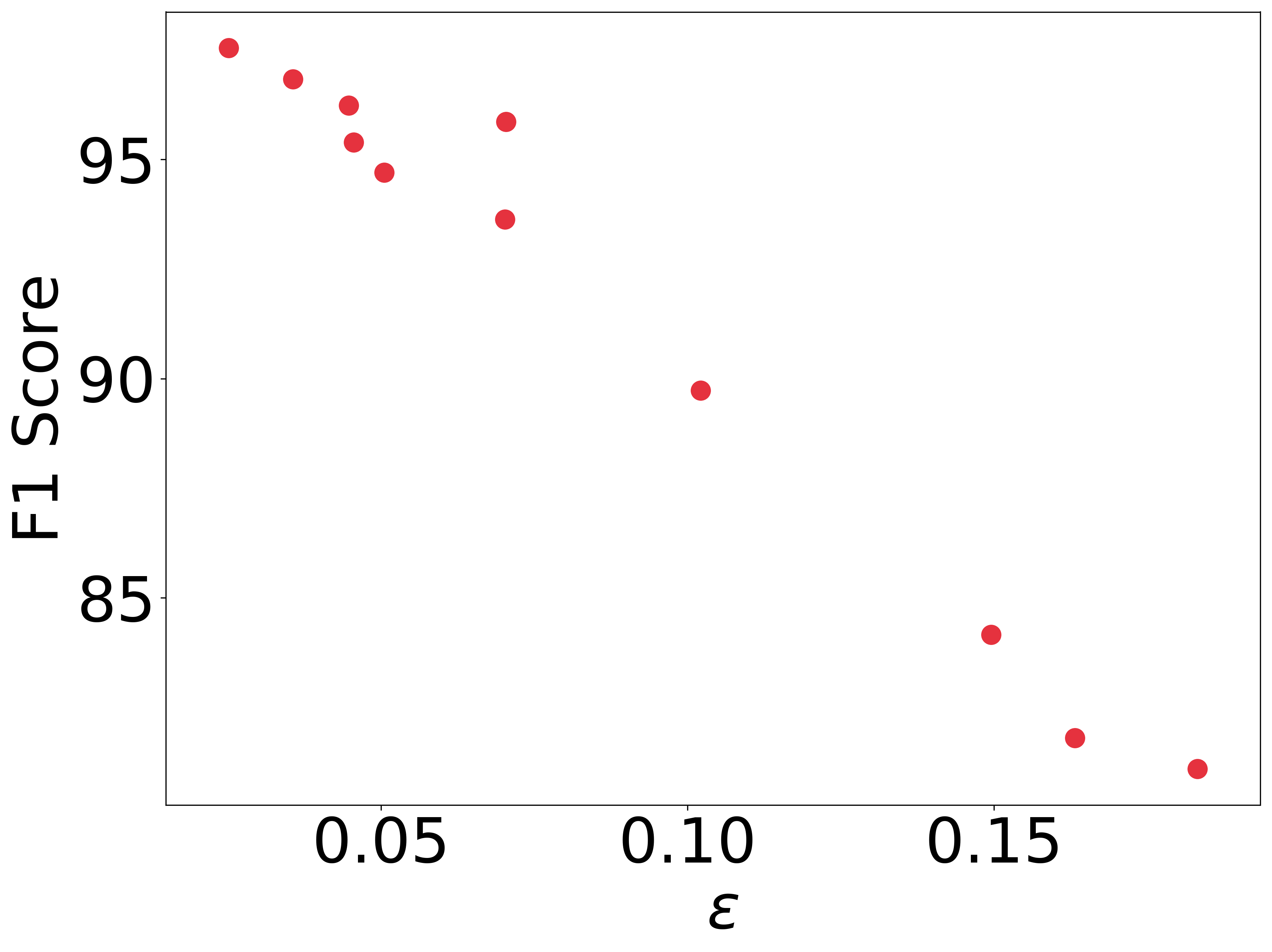}
	\caption{Significance of bias $\epsilon$ vs $\mathrm{F1}^\texttt{Dice}_\texttt{train}$.}
	\label{fig: bias-vs-quality}
\end{figure}%

\section{Conclusions}

In this paper we coined a new concept, Simpson’s bias, for its similar role in inducing sub-optimal training in ML and in inducing the Simpson’s paradox in statistics. We presented a theoretical taxonomy for the Simpson’s bias in ML, revealing how similar effect is embodied in a wide spectrum of ML metrics, from ones as simple as Accuracy, to ones as sophisticated as BLEU. For \emph{some} aggregate-form metrics, we show that it is possible to construct \emph{provably unbiased} average-form surrogate through adding special and uncommon (e.g. negative) smoothing constants. But the Simpson's bias is generally a factor with important impact in a variety of NLP tasks, as our experiments showed. We observed both noticeable margins of the bias and a significant number of “reversed” SGD steps in all the different tasks, data-sets, and metrics. 
Our experiments also show that models trained with “naively-conjugated” objectives (such as dice loss to F1) can be even worse than those trained with non-conjugated objectives (such as CE loss to F1), which could potentially reflect a significant sub-optimality when training using (seemingly-)conjugated objectives. Finally, a clear correlation between the Simpson's bias and training quality is consistently observed. We believe these results indicate that the Simpson's bias is a serious issue in NLP training, and probably in machine learning in general, that deserves more studies in the future.

\bibliographystyle{aaai}
\bibliography{simpson_bias}

\if\aaai1
\else
\clearpage
\appendix

\section{Proofs}
\label{sec_proof}
Both Theorem \ref{thm_precision_1} and \ref{thm_precision_2} are based on the following lemma.
\begin{lemma} \label{lem_precision}
	$\Fbar_{\texttt{P}^\gamma}(S) = 1 - \frac{ |\{\hat{y}_i=1, y_i=0\}| }{ (1+\gamma) n }$ for $\gamma \neq 0, -1$.
\end{lemma}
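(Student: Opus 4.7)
The plan is to prove Lemma \ref{lem_precision} by a straightforward case analysis on the pair $(y_i, \hat{y}_i) \in \{0,1\}^2$. Since each sample contributes the term $\frac{\gamma + y_i \hat{y}_i}{\gamma + \hat{y}_i}$ to $\Fbar_{\texttt{P}^\gamma}$, and both $y_i$ and $\hat{y}_i$ are binary, there are only four possible values this term can take.

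First, I would enumerate the four cases. For $(y_i, \hat{y}_i) = (0,0)$ the summand equals $\gamma/\gamma = 1$; for $(1,0)$ it again equals $\gamma/\gamma = 1$ (the numerator is $\gamma + 0 = \gamma$ since $y_i \hat{y}_i = 0$); for $(1,1)$ it equals $(\gamma+1)/(\gamma+1) = 1$; and for $(0,1)$ — the false-positive case — it equals $\gamma/(\gamma+1)$. The conditions $\gamma \neq 0$ and $\gamma \neq -1$ ensure that none of these denominators vanish, so the per-sample expression is well defined. The key observation is that only the false-positive case produces a value different from $1$.

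Next, letting $\mathrm{FP} \define |\{i : \hat{y}_i = 1, y_i = 0\}|$, I would split the sum and write
\begin{equation*}
\Fbar_{\texttt{P}^\gamma}(S) = \frac{1}{n}\Big[ (n - \mathrm{FP}) \cdot 1 + \mathrm{FP} \cdot \tfrac{\gamma}{\gamma+1} \Big].
\end{equation*}
A short algebraic simplification, using $\frac{\gamma}{\gamma+1} = 1 - \frac{1}{\gamma+1}$, then collapses this to $1 - \frac{\mathrm{FP}}{(1+\gamma)n}$, which is exactly the claimed identity. There is no real obstacle here; the only thing to watch is that the excluded values $\gamma \in \{0, -1\}$ correspond precisely to the degenerate denominators that appear in the case split, so the stated hypothesis is the natural one for the argument to go through.
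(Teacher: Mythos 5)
Your proof is correct and follows essentially the same route as the paper's: a case analysis (contingency table) over the four values of $(y_i,\hat{y}_i)$, noting that only the false-positive case contributes $\frac{\gamma}{1+\gamma}$ instead of $1$, followed by the same summation and simplification to $1-\frac{|\{\hat{y}_i=1,y_i=0\}|}{(1+\gamma)n}$. No gaps; your remark that $\gamma\notin\{0,-1\}$ is exactly what keeps the per-sample denominators nonzero matches the paper's treatment.
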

\begin{proof}
	By definition
	$\F_{\texttt{P}^\gamma}(i) = \frac{\gamma+y_i\hat{y}_i}{\gamma+\hat{y}_i}$. As both $\hat{y}_i$ and $y_i$ are binary variables in $\{0,1\}$, we can write the contingency table of $\F_{\texttt{P}^\gamma}(i)$ as
	
	\begin{center}
	\begin{tabular}{cc|cl}
		\hline 
		\rule[-1ex]{0pt}{2.5ex} $\hat{y}_i$ & $y_i$ & $\F_{\texttt{P}^\gamma}(i)$ & \\ 
		\hline 
		\rule[-1ex]{0pt}{2.5ex} $0$ & $0$ & $1$ & for $\gamma \neq 0$ \\ 
		\rule[-1ex]{0pt}{2.5ex} $0$ & $1$ & $1$ & for $\gamma \neq 0$\\ 
		\rule[-1ex]{0pt}{2.5ex} $1$ & $0$ & $\frac{\gamma}{1+\gamma}$ & for $\gamma \neq -1$ \\ 
		\rule[-1ex]{0pt}{2.5ex} $1$ & $1$ & $1$ & for $\gamma \neq -1$ \\ 
		\hline
	\end{tabular}
	\end{center}

	\noindent 
	from which we see that $\F_{\texttt{P}^\gamma}(i)$ is anchored at $1$ except for $\hat{y}_i=1$ and $y_i=0$ in which case $\F_{\texttt{P}^\gamma}(i)$ gets an additional penalty of $\frac{1}{1+\gamma}$. With this observation we immediately have 
	\eqmx{
		\Fbar_{\texttt{P}^\gamma}(S) 
		&= 	\frac{1}{n} \sum_{i=1}^n \F_{\texttt{P}^\gamma}(i)
		\\&=\frac{1}{n}\Big( n  - \sum_{i\in \{\hat{y}_i=1, y_i=0\}} \frac{1}{1+\gamma} \Big)
		\\&= 1 - \frac{\big| \{\hat{y}_i=1, y_i=0\} \big|}{n(1+\gamma)}
	}
\end{proof}

\begin{proofof}{Theorem \ref{thm_precision_1}}
	Let $\texttt{FP} \define \{\hat{y}_i=1, y_i=0\}$ and $\texttt{TP} \define \{\hat{y}_i=1, y_i=1\}$ denote the set of false positives and true positives, respectively. From Lemma \ref{lem_precision} we have $\Fbar_{\texttt{P}^\gamma}(S)=1-\frac{|\texttt{FP}|}{n(1+\gamma)}$. On the other hand, from \eqref{precision_gamma} we have $\F_{\texttt{P}^\gamma}(S) = \frac{|\texttt{TP}|+\gamma}{|\texttt{TP}|+|\texttt{FP}|+\gamma} = 1 - \frac{|\texttt{FP}|}{|\texttt{TP}|+|\texttt{FP}|+\gamma}$. Comparing the two equations we see that $\Fbar_{\texttt{P}^\gamma}(S) = \F_{\texttt{P}^\gamma}(S)$ when the denominators are equal, that is, if
	\eq{thm_precision_1_eq1}{
		n + n\gamma = |\texttt{TP}|+|\texttt{FP}|+\gamma
	.}
	Rearranging \eqref{thm_precision_1_eq1} gives $\gamma = \frac{|\texttt{P}|-n}{n-1} = \frac{\sum_i \hat{y}_i-n}{n-1}$ as desired. 
	
	Note that \eqref{thm_precision_1_eq1} is based on Lemma \ref{lem_precision} which requires $\gamma \neq 0$ and $-1$, or equivalently, requires $\sum_i \hat{y}_i \neq n$ and $1$. As the theorem has excluded the case of $\sum_i \hat{y}_i=1$, we only need to further encompass the special case of $\sum_i \hat{y}_i=n$. 
	
	The problem with $\sum_i \hat{y}_i=n$ is that in this case $\gamma = \frac{\sum_i \hat{y}_i-n}{n-1}=0$, and that $\gamma=0$ invalidates Lemma \ref{lem_precision}. However, having a closer look at its proof we see that the whole reason for Lemma \ref{lem_precision} to exclude $\gamma=0$ is exactly because $\gamma=0$ makes the first two entries of $\F_{\texttt{P}^\gamma}$'s contingency table ill-defined. Nevertheless, note that with $\sum_i \hat{y}_i=n$ we are dealing with a special model that always outputs $\hat{y}_i\equiv 1$, in which case we never run into the first two entries of $\F_{\texttt{P}^\gamma}$'s contingency table at all. As a result, in the special case of $\sum_i \hat{y}_i=n$, Lemma \ref{lem_precision} holds -- and thus \eqref{thm_precision_1_eq1} also holds -- even if $\gamma=0$.
	
	Finally, we remark that for $\sum_i \hat{y}_i=1$, that is, for models with exactly one positive output throughout the data set $S$, we indeed must have $\gamma\neq -1$ otherwise $\F_{\texttt{P}^\gamma}$ is ill-defined on that single positive instance. On the other hand, we see from the above proof that $\Fbar_{\texttt{P}^\gamma}(S) = \F_{\texttt{P}^\gamma}(S)$ only if $\gamma = \frac{\sum_i \hat{y}_i-n}{n-1} = -1$. The contradiction means there is no way to make $\Fbar_{\texttt{P}^\gamma}(S) = \F_{\texttt{P}^\gamma}(S)$ when $\sum_i \hat{y}_i=1$.
\end{proofof}

\begin{proofof}{Theorem \ref{thm_precision_2}}
	The proof idea is similar to that for Theorem \ref{thm_precision_1} except that now we want to connect $\Fbar_{\texttt{P}^\gamma}(S)=1-\frac{|\texttt{FP}|}{n(1+\gamma)}$ to $\F_{\texttt{P}}(S) = \F_{\texttt{P}^0}(S) = 1 - \frac{|\texttt{FP}|}{|\texttt{TP}|+|\texttt{FP}|}$. Clearly, the equality condition for $\Fbar_{\texttt{P}^\gamma}(S) = \F_{\texttt{P}}(S)$ is
	\eq{thm_precision_2_eq1}{
		n + n\gamma = |\texttt{TP}|+|\texttt{FP}|
	}
	or equivalently, $\gamma = \frac{|\texttt{P}|-n}{n} = \frac{\sum_i \hat{y}_i-n}{n}$. 
	
	Again, we need to discuss the two special cases $\sum_i \hat{y}_i=n$ and $\sum_i \hat{y}_i=0$ separately (as $\gamma=0$ and $-1$ in these two cases, respectively, which invalidate Lemma \ref{lem_precision}). But this time we observe that Theorem \ref{thm_precision_2} is valid in both special cases, so we don't need to exclude any model (even those that always or never output positive) from the theorem. 
	Specifically, when $\sum_i \hat{y}_i=0$ (or $\sum_i \hat{y}_i=n$) we have $\hat{y}_i\equiv 0$ (or $\hat{y}_i\equiv 1$), in which case Lemma \ref{lem_precision} and \eqref{thm_precision_2_eq1} hold even for $\gamma=0$ (or $\gamma=-1$), as the last (or first) two entries of $\F_{\texttt{P}^\gamma}(i)$'s contingency table are impossible. 
\end{proofof}

\begin{proofof}{Theorem \ref{thm_dsc}}
	The proof idea is similar to that of Lemma \ref{lem_precision}. By definition $\F_{\texttt{DSC}^\gamma}(i) = \frac{\gamma+2y_i\hat{y}_i}{\gamma+y_i+\hat{y}_i}$, whose contingency table is as follows.

    \begin{center}
	\begin{tabular}{cc|cl}
		\hline 
		\rule[-1ex]{0pt}{2.5ex} $y_i$ & $\hat{y}_i$ & $\F_{\texttt{DSC}^\gamma}(i)$ & \\ 
		\hline 
		\rule[-1ex]{0pt}{2.5ex} $0$ & $0$ & $1$ & for $\gamma \neq 0$ \\ 
		\rule[-1ex]{0pt}{2.5ex} $0$ & $1$ & $\frac{\gamma}{1+\gamma}$ & for $\gamma \neq -1$\\ 
		\rule[-1ex]{0pt}{2.5ex} $1$ & $0$ & $\frac{\gamma}{1+\gamma}$ & for $\gamma \neq -1$ \\ 
		\rule[-1ex]{0pt}{2.5ex} $1$ & $1$ & $1$ & for $\gamma \neq -2$ \\ 
		\hline
	\end{tabular}
    \end{center}
    
	\noindent 
	from the table we see that $\F_{\texttt{DSC}^\gamma}(i)=1$ when $y_i=\hat{y}_i$ and $\F_{\texttt{DSC}^\gamma}(i)=1 - \frac{1}{1+\gamma}$ when $y_i\neq \hat{y}_i$. With this observation we have 
	\eqmx{
		\Fbar_{\texttt{DSC}^\gamma}(S) 
		&= 	\frac{1}{n} \sum_{i=1}^n \F_{\texttt{DSC}^\gamma}(i)
		\\&=\frac{1}{n}\Big( n  - \sum_{i\in \{\hat{y}_i\neq y_i\}} \frac{1}{1+\gamma} \Big)
		\\&= 1 - \frac{\big| \{\hat{y}_i\neq y_i\} \big|}{n(1+\gamma)}
	}	
	when $\gamma \neq 0,-1,-2$.
	
	Note that the above result also implies that with $\gamma\approx 0$ (such as $\gamma=10^{-6}$), we have $\F_{\texttt{DSC}^\gamma}(i) \approx 0$ when $y_i\neq \hat{y}_i$, and $\F_{\texttt{DSC}^\gamma}(i)=1$ when $y_i=\hat{y}_i$. In other words, in this case we have $\F_{\texttt{DSC}^\gamma}(i) \approx \F_{\texttt{AC}}(i)$, which in turn means that $\F_{\texttt{DSC}^\gamma}(S) = \F_{\texttt{AC}}(S)$ for $\gamma\approx 0$.
\end{proofof}

\begin{table*}[t]
    \centering
    \begin{tabular}{cccccccc}  
    \toprule
lr &	 bs  &	model &	 $F1^{Dice}_{train}$ &	 $\epsilon$  &	 $F1^{Dice}_{dev}$  &	 $R$ & $R$ ratio \\
\midrule
2.00E-05 &	16  &	BERT	&   96.23	&   0.0447	&   90.18	&   50   &  51.02 \\
2.00E-06 &	16	&   BERT	&   81.81	&   0.1632	&   81.93	&   34   &  34.69 \\
5.00E-05 &	16	&   BERT	&   89.73	&   0.1021	&   87.93	&   53   &  54.08 \\
\hdashline
2.00E-05 &	32	&   RoBERTa &   94.70	&   0.0505	&   91.70	&   31   &  28.70 \\
2.00E-05 &	32	&   M-BERT 	&   95.86	&   0.0704	&   88.54	&   28   &  26.17 \\
2.00E-05 &	32	&   BERT	&   93.63	&   0.0702	&   89.00	&   55   &  48.25 \\
2.00E-06 &	32	&   BERT	&   81.10	&   0.1832	&   81.94	&   31   &  27.19 \\
5.00E-05 &	32	&   BERT	&   96.83	&   0.0356	&   90.31	&   45   &  39.47 \\
\hdashline
2.00E-05 &	8	&   BERT	&   95.39	&   0.0455	&   90.69	&   43   &  43.88 \\
2.00E-06 &	8	&   BERT	&   84.16	&   0.1495	&   83.25	&   51   &  52.04 \\
5.00E-05 &	8	&   BERT	&   97.54	&   0.0251	&   89.27	&   36   &  36.73 \\
\bottomrule
\end{tabular}

\caption{Experiment results from model with various learning rate (lr), training batch size (bs) and different models including: BERT (bert-base-uncased), RoBERTa (roberta-base) and M-BERT (bert-base-multilingual-uncased) on MRPC task. $F1^{Dice}_{dataset}$ refers to a F1 from a model trained with Dice to perform inference on $(train/dev)$ dataset. $\epsilon$ represents the significance of bias on a model trained with Dice. $R$ refers to reversal pair and $R$ ratio = \#$R$ / \#total step.}
\label{tab:hyper-para}
\end{table*}

\section{Data and Setting}

\subsubsection{Paraphrase Similarity Matching (PSM)}   For PSM, we use two standard data sets, Microsoft Research Paragraph Corpus (MRPC)~\cite{dolan2005automatically}, and Quora Question Pairs (QQP)~\cite{wang2018glue}. The purpose of MRPC is to determine whether two sentences in the pair are semantically equivalent. For MRPC task, there are 4k sentence-pairs in taining corpus and 408 data in development set. The goal of QQP is to determine whether two questions are semantically equivalent. For both two datasets, we use Accuracy and F1(DSC) Score to evaluate the model performance. For QQP task, there are 364k sentence-pairs in training set and 40k sentence-pairs in development set. As discussed above, the Dice loss originates from DSC (F1) value. 

\subsubsection{Named Entity Recognition (NER)} 
Name Entity Recognition (NER) is a popular task in NLP. NER's goal is to identify and segment the named entities, then classify them under various predefined classes, such as a person, location, and organization. We fine-tune BERT base multilingual cased model with different loss function (CE / Dice) on GermEval 2014 dataset~\cite{benikova2014nosta}. This dataset builds on German Wikipedia and News Corpora, which covers over 31,000 sentences corresponding to over 590,000 tokens. After filtering ``control character'', we split longer sentences into smaller ones (once the max sequence length is reached) and generate their label list file, containing all available labels in the dataset. Since the data is a sample from the German language, we use the multilingual version of BERT, released by ~\citet{devlin2018bert} as a language model pre-trained by monolingual corpora in 104 languages. This model excels at zero-shot cross-lingual tasks.

\subsubsection{Machine Translation (MT)}
Machine translation (MT) task is to transform source language text into target language text which has similar meanings. The most popular MT models $f$ are trained on collections of source and target sentence pairs of similar meaning $(X^{(i)},Y^{(i)})$ which are tokenized sentences in each language. Specifically, let $X=(\texttt{BOS}, x_1, x_2, \dots, x_N, \texttt{EOS})$ be the tokenized source sentence of length $N$ and $Y=(\texttt{BOS}, y_1, y_2, \dots, y_M, \texttt{EOS})$ be the tokenized target of length $M$, where $x_i$ and $y_i$ are tokens in source and target vocabulary, the model learns to predict $Y$ based on $X$. At each time step $t$, the model make predictions of the next target token based on $X$ and partial $Y_{1:t-1}=(\texttt{BOS}, y_1, y_2, \dots, y_{t-1})$ by outputting a probability over target vocabulary $y_t=f(X, Y_{1:t-1})$. 
During training, usually the ground truth $y_t$ is added to $Y_{1:t-1}$ to make the next prediction. To make correct predictions, the model seeks to maximize the likelihood probability of $Y$ on each token. During testing, the beam search algorithm will search for the best tokens until the $\texttt{EOS}$ marker is reached. Usually, the corpus-level BLEU~\cite{papineni2002bleu, post-2018-call} metric is used to evaluate the model performance. We use \texttt{IWSLT 2016} dataset of English--German language pair as our training data, which contains $209678$ sentence pairs.





\section{Significance of Bias vs Training Quality}

It can be seen from Table~\ref{tab:hyper-para} that when the hyper parameters are fixed, the larger the value of $\epsilon$, the smaller the end-metric (F1) on the train set. The results on the dev set are slightly different, which may cause by model generalization. There is no significant agreement between the value of $\epsilon$ and the number or reversal pairs ratio. This may be because the bias is not only related to the number of reversal pairs, but also to the degree of reversal. These experimental results suggest that the Simpson's bias is a common phenomenon and not changing with model tuning.

\begin{figure*}[t]
\begin{subfigure}{.5\textwidth}
\centering
  \includegraphics[trim={0cm 0cm 0cm 0cm},clip,scale=0.20]{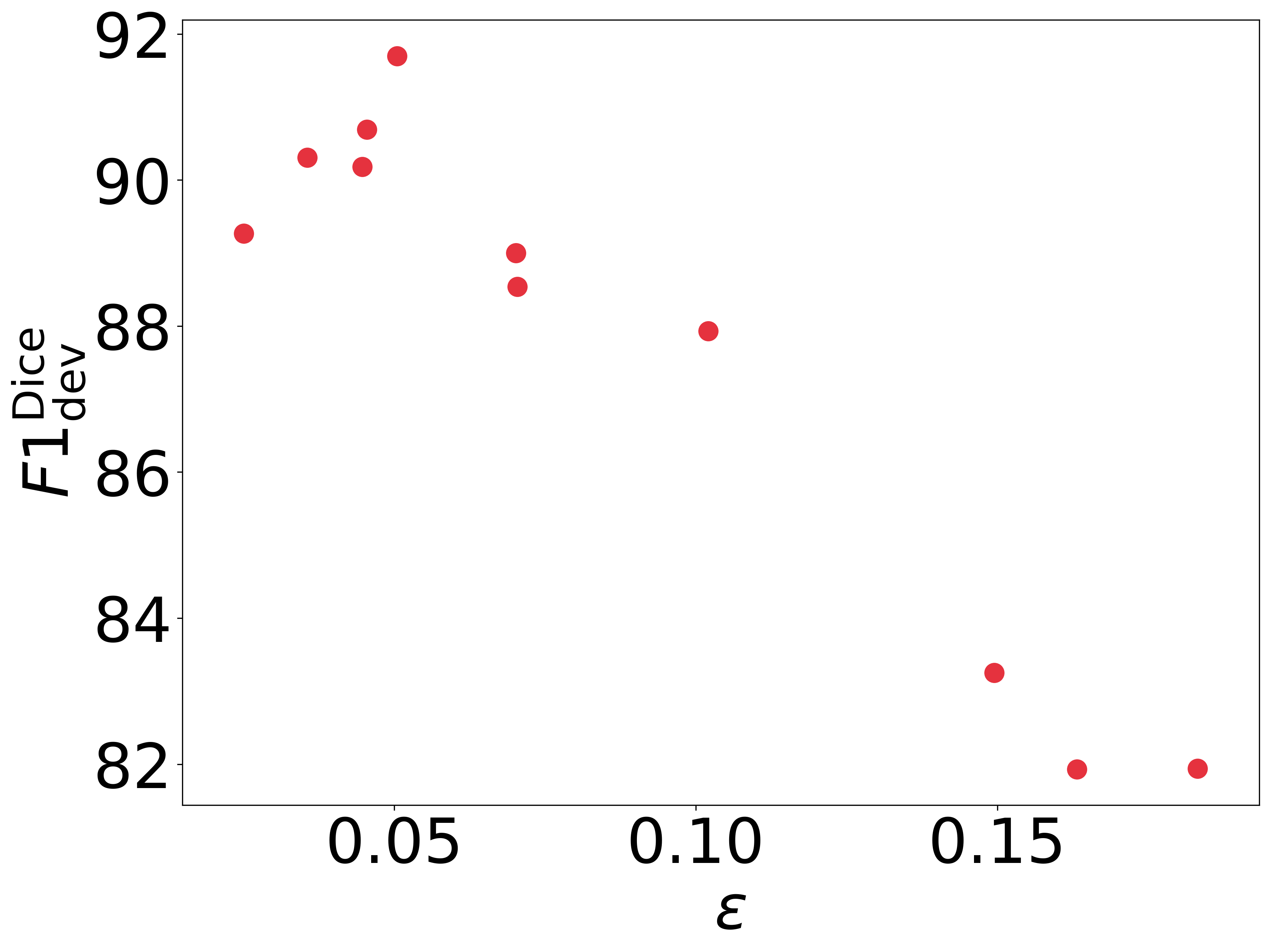}
	\caption{bias $\epsilon$ vs $\mathrm{F1}^\texttt{Dice}_\texttt{dev}$}
  \label{fig:3-1}
\end{subfigure}%
\begin{subfigure}{.5\textwidth}
\centering
  \includegraphics[trim={0cm 0cm 0cm 0cm},clip,scale=0.20]{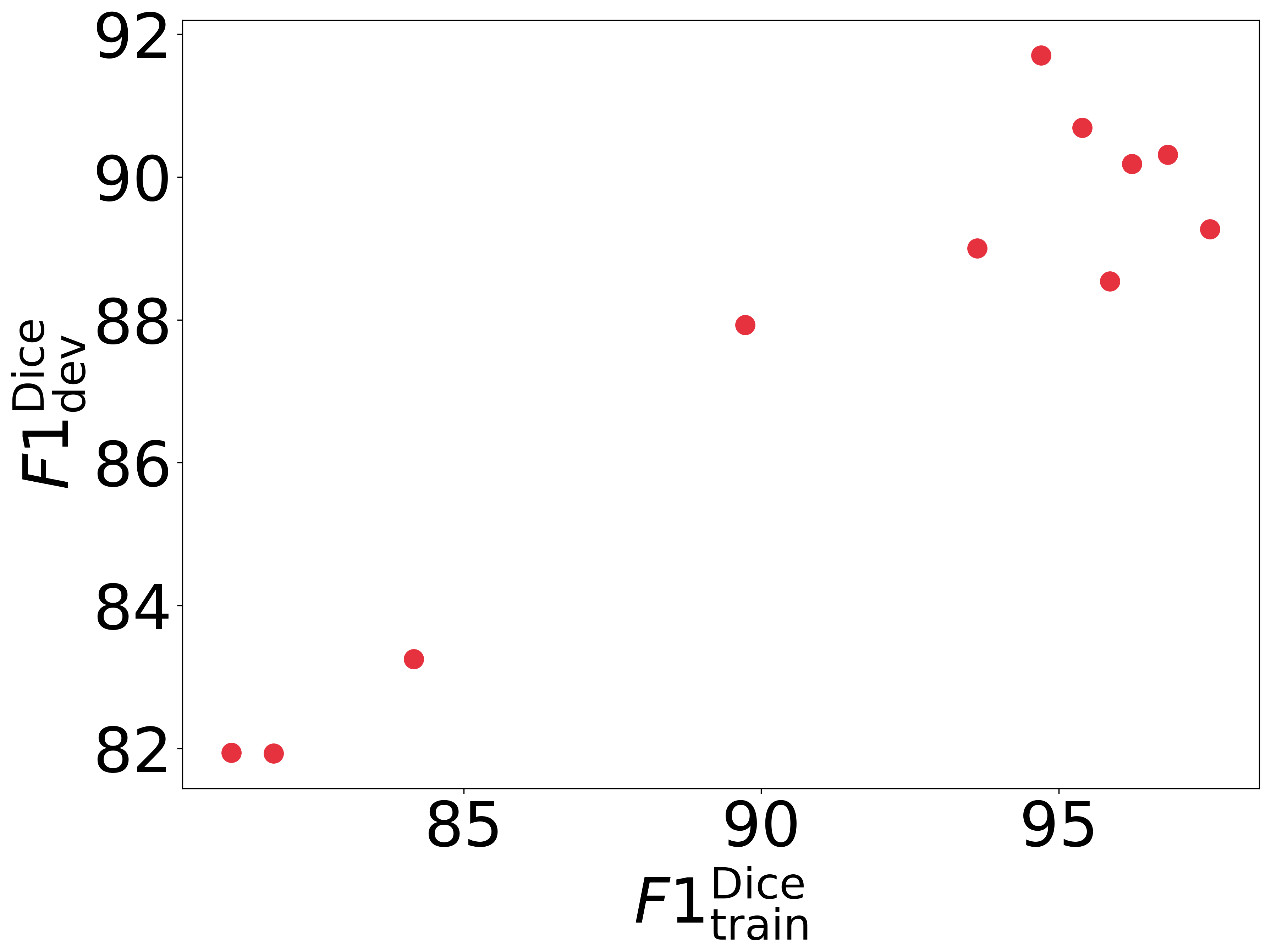}
	\caption{$\mathrm{F1}^\texttt{Dice}_\texttt{train}$ vs $\mathrm{F1}^\texttt{Dice}_\texttt{dev}$}
  \label{fig:3-2}
\end{subfigure}%
\caption{The impacts of the Simpson's bias on training quality.}
\end{figure*}

Intuitively, batch size may have a special impact on Simpson's bias. However, it is not. It can be seen from Table~\ref{tab:hyper-para} that when the learning rate is fixed, and only the batch size is changed, the larger the value of $\epsilon$, the smaller the end-metric (F1) on the training set. Meanwhile, if we fix the batch size and observe the effect of the learning rate on Simpson's bias, the same conclusion can be obtained. This relationship exists equally for the model quality on dev dataset $\mathrm{F1}^\texttt{Dice}_\texttt{dev}$ and the significance of bias $\epsilon$ as well, as can be see from Figure~\ref{fig:3-2}. The performance does not decrease with the increase of the $\epsilon$ when $\epsilon$ at a lower value. Together all the figures in Figure~\ref{fig: bias-vs-quality} we think it is mainly due to the model generalization

It may suggest that the training batch size has no special effect on $\epsilon$, possibly because of the training batch size $\ll$ the corpus size. Moreover, changing the model type does not affect the correlation here. These experimental results suggest that the Simpson's bias is a common phenomenon in NLP training and not changing with model tuning.
\fi

\end{document}